\definecolor{shadecolor}{rgb}{1,0.94,0}
\definecolor{mygray}{gray}{.9}
\newtheorem{cor}{Corollary}
\newtheorem{lem}{Lemma}
\newtheorem{prop}{Proposition}
\newtheorem{defn}{Definition}
\begin{document}

\title{From MIM-Based GAN to Anomaly Detection: Event Probability Influence on Generative Adversarial Networks}

\author{Rui~She,
        Pingyi~Fan
\IEEEcompsocitemizethanks{
\IEEEcompsocthanksitem R.~She and P. Fan are with Beijing National Research Center for Information Science and Technology and the Department of Electronic Engineering, Tsinghua University, Beijing, 100084, China.
E-mail: sher15@mails.tsinghua.edu.cn; fpy@tsinghua.edu.cn.
}}


\maketitle

\begin{abstract}
In order to introduce deep learning technologies into anomaly detection, Generative Adversarial Networks (GANs) are considered as important roles in the algorithm design and realistic applications.
In terms of GANs, event probability reflected in the objective function, has an impact on the event generation which plays a crucial part in GAN-based anomaly detection.
The information metric, e.g. Kullback-Leibler divergence in the original GAN, makes the objective function have different sensitivity on different event probability, which provides an opportunity to refine GAN-based anomaly detection by influencing data generation.
In this paper, we introduce the exponential information metric into the GAN, referred to as MIM-based GAN, whose superior characteristics on data generation are discussed in theory.
Furthermore, we propose an anomaly detection method with MIM-based GAN, as well as explain its principle for the unsupervised learning case from the viewpoint of probability event generation.
Since this method is promising to detect anomalies in Internet of Things (IoT), such as environmental, medical and biochemical outliers, we make use of several datasets from the online ODDS repository to evaluate its performance and compare it with other methods.
\end{abstract}

\begin{IEEEkeywords}
Information metric, Kullback-Leibler divergence, generative adversarial networks, probability event generation, unsupervised anomaly detection.
\end{IEEEkeywords}

\IEEEpeerreviewmaketitle

\section{Introduction}\label{sec:introduction}

\subsection{Background}
Towards the smart and wireless-Internet era, it is favored and required to introduce Artificial Intelligence (AI) technologies into the real-world applications of Internet of Things (IoT) \cite{An-intelligent,DeepEDN,Recent-GAN-survey}.
As a widely used AI technology, Generative Adversarial Networks (GANs) \cite{Generative-Adversarial-Nets} are feasible for complex or high-dimensional data processing that is commonly suffered in IoT \cite{Selective-unsupervised,Applying-cross-modality}.
In practice, there are a great deal of applications based on GANs \cite{ESR-GAN,Generating-Videos,Imitating-driver-behavior,A-novel-semi-supervised,The-secure-steganography}, such as image reconstruction,
behavior imitation, pedestrian reidentification and secure steganography, especially anomaly detection.

In fact, anomaly detection, as a research direction with many realistic uses, has attracted a lot of attentions 
in numerous scenarios, such as fraud identification \cite{fraud-detection}, manufacturing quality control \cite{Big-data- driven}, emergency alarm \cite{ATD} and medical screening \cite{Probabilistic-mismatch}, especially in IoT or Industrial IoT (IIoT).
In terms of IoT, there are a huge number of sensors generating large volumes of multidimensional data, where data anomalies caused by system errors or malicious attacks do harm to the application  effectiveness or even make the system break down \cite{Big_Data_IoT,Two_tier,6G_Enabled}.
As a crucial challenge in IoT, anomaly detection has been widely investigated in system security and monitoring management fields, including Intrusion Detection Systems (IDSs) \cite{Big_Data_IoT,Two_tier,Cyber_Physical,Distributed_Intrusion,GAN_Auto_Encoder},
fraud monitoring \cite{Make-the-rocket,CHRIST},
data-based decision \cite{Monotone,Wireless_Commun,feature-based}
and so on.
Conceptually, anomaly detection refers to digging out the abnormal data that is inconsistent with the general data in the viewpoint of given rules or features, which is also regarded as a binary-classification issue for normal data and abnormal data.

As for usual anomaly detection methods, it is popular to use a model to describe the characteristics of normal data and then find the abnormal data having large deviation from this model \cite{Isolation-forest,Isolation-based,Estimating-the-support,High-dimensional-and-large-scale}, which corresponds to the core idea of density-based methods \cite{exemplar-based-GMM,Robustmodel-based,Deep-autoencoding-Gaussian}, distance-based methods \cite{LOF,Fast-memory,A-near-linear-time,Outlier-detection}, reconstruction-based methods \cite{Direct-robust-matrix,Fast-matrix-factorization,robust-deep-autoencoders,autoencoder-ensembles} and mixture methods \cite{K-means-Clustering-PCA,Dimensionality-Reduction}.
However, in these methods, not only different assumptions and prior information are needed to select appropriate parameters of models, but high-dimensional data also damages the detection efficiency and effectiveness.

With the development of sensors for data acquisition, multi-sensor fusion data is usually obtained which is difficult to handle and analyze by using conventional methods.
In this case, learning-based neural networks technologies \cite{A-deep-one-class,Anomaly-detection-of-time}, such as Convolutional Neural Networks (CNNs), Recurrent Neural Networks (RNNs), Variational Auto-Encoder (VAE) networks, are considered to bring gains or novel results into the anomaly detection.
However, these existing networks mainly focus on handling the data with particular characteristics, such as spatial data or temporal data, which implies it is necessary to alter the networks according to specific datasets.
While, by resorting to the combination structure for neural networks, GANs are introduced into anomaly detection \cite{Unsupervised-anomaly,Adversarially-learned,Adversarial-feature,GANomaly,f-AnoGAN}, which not only provide a game model to deal with multidimensional data distributions but also have advantages on incorporating different networks to process different datasets.

As for GAN-based anomaly detection, there are almost weakly supervised or semi-supervised methods with partial normal data labels, as well as the unsupervised methods with prior information \cite{Telemetry-data-based,Anomaly-monitoring,Weakly-supervised,Semisupervised-spectral, Unsupervised-pixel-wise}, such as spatial and temporal characteristics or other preprocessing based on data attributes.
Few totally unsupervised methods are investigated for unordered high-dimensional data.
Moreover, few works theoretically explain the intrinsic principle of GAN-based anomaly detection, from the perspective of event probability.

In this work, we take advantage of a new kind of GAN with the exponential information metric to detect anomalies hidden in unordered discrete data, without any supervised training or prior information.
Especially, event probability, as a fundamental factor on the data generation of GANs, is analyzed from the viewpoint of its influence on the objective function.
Based on this, we provides a theoretical explanation why the proposed GAN-based method is workable for entirely unsupervised anomaly detection.

\subsection{Related works}
\subsubsection{Anomaly detection}\ \par
As far as anomaly detection is concerned, it is a straightforward method to establish a model for normal data profiles and pick out the anomalous data not fitting the model well.
The Gaussian Mixture Model (GMM) \cite{exemplar-based-GMM} is applied to estimate the density of the given data to detect anomalies, whose parameters are evaluated by use of the globally optimal Expectation-Maximization (EM) algorithm \cite{Robustmodel-based} or an estimation neural network \cite{Deep-autoencoding-Gaussian}.
Instead of using the density of data, distance-based methods also provide a way to detect anomalies, which measure the rarity of each object, such as
Local Outlier Factor (LOF) \cite{LOF,Fast-memory} and Angle-based Outlier Factor (AOF)\cite{A-near-linear-time}.
To handle multivariate data, Minimum Covariance Determinant (MCD) estimator \cite{Outlier-detection} is used to robustly estimate the corresponding location and scatter based on Mahalanobis squared distance.
While, different from the popular methods focusing on normal data profiling, Isolation Forest \cite{Isolation-forest,Isolation-based} makes use of anomaly isolation to distinguish abnormal data from normal data.

To achieve anomaly detection with high-dimensional data, One-Class Support Vector Machine (OC-SVM) \cite{Estimating-the-support,High-dimensional-and-large-scale} is used to obtain a hyperplane to classify the original data into the majority set or minority set, while it also needs appropriate parameters of model before the testing process.
Besides, the reconstruction-based methods, such as Matrix Factorization (MF) \cite{Direct-robust-matrix,Fast-matrix-factorization} as well as deep Auto-encoders \cite{robust-deep-autoencoders,autoencoder-ensembles}, are considered to detect anomalies with high reconstruction error using the compression and decompression idea.
Moreover, it is available to pick out anomalies hidden in high-dimensional data by mixing several techniques, such as K-Means algorithm and Principal Component Analysis (PCA) \cite{K-means-Clustering-PCA,Dimensionality-Reduction}, which combines dimensionality reduction and clustering methods to achieve anomaly detection.

Using the game framework based on neural networks, GANs are introduced into anomaly detection to bring many benefits including high-dimensional data processing and incorporating other complex neural networks.
AnoGAN \cite{Unsupervised-anomaly} is proposed to discover unhealthy cases from unseen medical images by using deep CNNs as the generator and discriminator to score the testing samples.
Then, to further stabilize training and improve detection performance, Adversarially Learned Anomaly Detection (ALAD) \cite{Adversarially-learned} is designed by resorting to the structure of Bidirectional GANs \cite{Adversarial-feature} and encoder networks, as well as, GANomaly \cite{GANomaly} is proposed using the generator with encoder-decoder-encoder networks.
Besides, f-AnoGAN \cite{f-AnoGAN} is presented to accelerate training by exploiting a related encoder training procedure and a kind of GAN which takes advantage of Wasserstein distance as the information metric in the objective function.

In many scenarios of anomaly detection, the architectures of GANs and objective functions based on information metrics are adjusted to fit the specific data with different characteristics.
For instance,
CNNs and Long-short Term Memory (LSTM) are used in generators and discriminators to handle the data with spatial and temporal features \cite{Telemetry-data-based,Anomaly-monitoring}.
As well, Auto-Encoder (AE) networks are also introduced into adversarial networks as the reconstruction model to detect anomalies for Hyper-Spectral Images (HSIs) \cite{Weakly-supervised,Semisupervised-spectral, Unsupervised-pixel-wise}.

In addition, anomaly detection is commonly used in IoT applications.
For example, a cluster-based data analysis framework is designed based on recursive principal component analysis to achieve outlier detection and sensor data aggregation in IoT systems \cite{RPCA}.
As well, a fog-empowered anomaly detection method is presented depending on an efficient hyperellipsoidal clustering algorithm, which is suitable for the fog computing architecture \cite{Fog_empowered}.
An anomaly detection algorithm is studied based on multidimensional data processing, which is also with an autoregressive exogenous model for detecting sensor-level anomalies and a Cumulative Coefficient of Value (CCoV) measure for high-value sensing device identification \cite{6G_Enabled}.
A mixture method based on big data analytics is presented for the Network-based IDS (NIDS) of IoT networks, which incorporates the event strength function, Dynamic Pareto Set and multicriteria temporal graphs \cite{Big_Data_IoT}.

Besides, as a kind of popular deep learning method for multidimensional data processing, GANs are also introduced into the anomaly detection for IoT systems, especially for IDS and monitoring applications.
Specifically, a distributed GAN-based IDS method is studied to detect internal and external attacks for IoT without relying on the central control unit too much \cite{Distributed_Intrusion}.
An hierarchical anomaly detection approach is designed with the cooperation of GAN and AE networks, which addresses the issues of the lack of sufficiently-large amount of IoT traffic data and privacy protection \cite{GAN_Auto_Encoder}.
An integrated generative model for IIoT is investigated based on GANs and bi-directional LSTM with attention mechanism to dig out multidimensional industrial time series anomalies \cite{Integrated_GAN}.
As well, an edge-based detection method adopts encoder networks and original GAN to construct the Stepwise GAN (StepGAN) \cite{Make-the-rocket}, which is used to detect anomalies of liquid rocket engine at the edge of IoT.

\subsubsection{Models of GANs}\ \par
From the perspective of objective functions, it is common to investigate the GANs with different information metrics.
As far as the original GAN \cite{Generative-Adversarial-Nets} is concerned, there exists Jensen-Shannon divergence in the objective function, which consists of two symmetric Kullback-Leibler Divergences (KLDs).
Similar to the original GAN, a series of classical GANs, such as InfoGAN \cite{InfoGAN} and Variational Auto-Encoder GAN (VAEGAN) \cite{variational-autoencoder-GAN},
adopt KLD to measure the hidden information distance between real data and generative data.
However, the objective function based on KLD has its disadvantages on effectiveness and efficiency of learning process.
To make up for this, there are some following literatures replacing the KLD with other information distances.

To avoid the vanishing gradients phenomenon, Pearson $\chi^2$ divergence is introduced into the objective function, whose corresponding GAN is named Least Squares Generative Adversarial Networks (LSGAN) \cite{Least-squares-generative-adversarial-networks}.
Another information metric, Earth Mover Distance (EMD) is also adopted in Wasserstein Generative Adversarial Networks (WGAN) to achieve a better training trade-off between the discriminator and the generator \cite{Wasserstein-GAN}.
To enable the stable training without hyper-parameter tuning, an improved WGAN with gradient penalty (named WGAN-GP) is designed by enforcing the Lipschitz constraint \cite{Improved-Training-of-Wasserstein-GANs}.

In fact, based on the core idea of original GAN, there are a series of works to ameliorate GANs from the perspective of architectures, such as Conditional GAN (CGAN) \cite{cGAN} and Deep Convolutional GAN (DCGAN) \cite{DCGAN}.
Based on the architecture modifications in the Bayesian GAN \cite{Bayesian-GAN} and two-discriminators GAN \cite{Collapse-GAN}, the issue of mode collapse in the original GAN is overcome, though training techniques are also available to handle this issue \cite{Improved-training}.
Moreover, there are other GANs with different architectures designed to improve the generation performance for special applications.
For instance, CycleGAN \cite{CycleGAN} makes use of two mapping functions with cycle consistency losses to refine image-to-image translation.
To enable high generation quality for images,
Progressively-Growing GAN \cite{PGGAN} and
Stacked GAN \cite{StackGAN} are designed based on multi-scale architectures for both discriminator and generator.
While, to solve the issue about controlling the image synthesis,
StyleGAN \cite{StyleGAN} is proposed by use of a new architecture with Adaptive Instance Normalization (AdaIN).
As a high-fidelity image generation model, BigGAN \cite{BigGAN} can train large-scale neural networks, which include many components such as Self-Attention and Spectral Normalization.

In our work, an improved GAN is designed from the viewpoint of information metric in the objective function, which is independent from the research route based on complicated architectures.
More importantly, information metrics not only make differences on the training process performance of GANs by altering the corresponding mathematical characteristics of objective functions, but also reflect the event probability influence on data generation efficiency in some degree.
This provides a potential advantage for the anomaly detection based on GANs.

\subsection{Contributions and organization}
We make a summary for the contributions and organization as follows:

\begin{itemize}
\item
By drawing the main thought of Message Importance Measure (MIM) into the objective function, a novel GAN model called MIM-based GAN is introduced, whose special characteristics are discussed with respect to data generation, especially for small probability events.
\item
By resorting to MIM-based GAN, an unsupervised anomaly detection method is proposed, whose principle is also analyzed from the viewpoint of probability event generation.
\item
Several real datasets are used to evaluate the performance of unsupervised anomaly detection for the method with MIM-based GAN and compare it with other methods.
\end{itemize}

The rest part is organized as follows.
In Section II, MIM-based GAN is presented based on the idea of MIM and its several characteristics are also investigated in theory.
In Section III, an unsupervised anomaly detection method with MIM-based GAN is proposed and its principle is explained in theory.
Section IV take experiments to evaluate the data generation performance for MIM-based GAN, as well as
provide the comparison for the method with MIM-based GAN and several other methods in terms of unsupervised anomaly detection.
In the end, we make a conclusion for this work in the Section V.

\section{Model of MIM-based GAN}
\subsection{A general form of two-player game in GANs}
Considering the core idea of GANs, both optimal discriminator and generator networks are trained to generate the data which follows the distribution approximating to the real.
From the conventional perspective, the framework of GANs is designed by use of the two-player game between the two networks, in which the objective functions play important roles.

In general, the two-player game for the objective function optimization can be described as
\begin{equation}\label{eq.GAN_general_optimizaition}
\begin{aligned}
    \min_{G} \max_{D} L(D,G),
\end{aligned}
\end{equation}
as well as the objective function denoted by $L(D,G)$ is defined as
\begin{equation}\label{eq.GAN_general}
\begin{aligned}
    L(D,G)
    & = \mathbb{E}_{{\bm x}\sim \mathbb{P}}[f(D({\bm x}))]
    +\mathbb{E}_{{\bm z}\sim \mathbb{P}_{z}}[g(D(G({\bm z})))] \\
    & = \mathbb{E}_{{\bm x}\sim \mathbb{P}}[f(D({\bm x}))]
    +\mathbb{E}_{{\bm x}\sim \mathbb{P}_{g_{\theta}}}[g(D({\bm x}))] \text{,}
\end{aligned}
\end{equation}
where $f(\cdot)$ and $g(\cdot)$ denote two functions, $D$ denotes the discriminator network, $G$ denotes the generator network, ${\bm x}$ and ${\bm z}$ are the inputs for $D$ and $G$ respectively, $\mathbb{P}$, $\mathbb{P}_{g_{\theta}}$ and $\mathbb{P}_{z}$ denote the corresponding distributions for the real data, generative data and input data of generator, respectively.

In particular, there are several classical GANs in Table \ref{table.summary_GANs}, which are
summarized in the viewpoint of the form of objective function.
In this regard, the essential idea is that information metrics are considered as theoretical units for the specific objective functions of GANs.
\begin{table}[!htb]
\renewcommand{\arraystretch}{1.0}
\centering
\caption{{The GANs summarized according to the form of objective function}}
\label{table.summary_GANs}
\newcommand{\tabincell}[2]{\begin{tabular}{@{}#1@{}}#2\end{tabular}}
\resizebox{\textwidth}{30mm}{
\begin{tabular}{|l|l|l|}
\bottomrule
\rowcolor{mygray}
\textbf{Type of GANs} & \textbf{Objective Function Optimization} &\textbf{Corresponding Functions in the General Form}\\
\hline
\tabincell{l}{Original GAN\\ \cite{Generative-Adversarial-Nets}\\ }
&
\tabincell{l}{
$
\begin{aligned}
&\min\limits_{G} \max\limits_{D} \big\{ \mathbb{E}_{{\bm x}\sim \mathbb{P}}[\ln D({\bm x})]
+\mathbb{E}_{{\bm z}\sim \mathbb{P}_{z}}[\ln (1-D(G({\bm z})))] \big\}\\
&\Leftrightarrow\\
&\max\limits_{G} \min\limits_{D}
\Big\{ \mathbb{E}_{{\bm x}\sim \mathbb{P}}\Big[\ln \big(\frac{1}{D({\bm x})}\big)\Big]
+\mathbb{E}_{{\bm z}\sim \mathbb{P}_{z}}\Big[\ln \big(\frac{1}{1-D(G({\bm z}))}\big)\Big] \Big\}
\end{aligned}
$\\
}
&
\tabincell{l}{
$\bullet$ $f(\cdot)$ and $g(\cdot)$ in Eq. (\ref{eq.GAN_general}):\\
\quad $f(u)=\ln(u)$, $g(u)=\ln(1-u)$,\\
\quad or $f(u)=-\ln(\frac{1}{u})$, $g(u)=-\ln(\frac{1}{1-u})$.
}
\\
\hline
\tabincell{l}{LSGAN\\ \cite{Least-squares-generative-adversarial-networks}\\}
&
\tabincell{l}{
$\max\limits_{G} \min\limits_{D} \big\{ \frac{1}{2}\mathbb{E}_{{\bm x}\sim \mathbb{P}}[(D({\bm x})-1)^2]+ \frac{1}{2}\mathbb{E}_{{\bm z} \sim \mathbb{P}_{z}}[(D(G({\bm z})))^2] \big\}
$\\
}
&
\tabincell{l}{
$\bullet$ $f(\cdot)$ and $g(\cdot)$ in Eq. (\ref{eq.GAN_general}):\\
\quad $f(u)=-\frac{1}{2}(u-1)^2$ and $g(u)=-\frac{1}{2} u^2$.
}
\\
\hline
\tabincell{l}{WGAN \\ \cite{Wasserstein-GAN}\\}
&
\tabincell{l}{
$\min\limits_{G} \max\limits_{D} \big\{ \sup\limits_{\|D\|_{L}\le 1} \{\mathbb{E}_{{\bm x}\sim \mathbb{P}}[D({\bm x})] - \mathbb{E}_{{\bm z}\sim \mathbb{P}_{z}}[D(G({\bm z}))]\} \big\}$\text{,}\\
where $\|\cdot\|_{L}\le 1$ denotes $1$-Lipschitz constraint condition.
}
&
\tabincell{l}{
$\bullet$ $f(\cdot)$ and $g(\cdot)$ in Eq. (\ref{eq.GAN_general}):\\
\quad $f(u)=u$ and $g(u)=-u$.
}
\\
\hline
\tabincell{l}{MIM-based GAN\\ $[\text{Ours}]$\\}
&
\tabincell{l}{
$\max\limits_{G} \min\limits_{D} \big\{ \mathbb{E}_{{\bm x}\sim \mathbb{P}}[\exp(1-D({\bm x}))]
    +\mathbb{E}_{{\bm z}\sim \mathbb{P}_{z}}[\exp(D(G({\bm z})))] \big\}$\\
}
&
\tabincell{l}{
$\bullet$ $f(\cdot)$ and $g(\cdot)$ in Eq. (\ref{eq.GAN_general}):\\
\quad $f(u)=-\exp(1-u)$ and $g(u)=-\exp(u)$.
}
\\
\toprule
\end{tabular}}
\end{table}

\subsection{MIM-based GAN and its characteristics}\label{subsection.MIM-GAN}
Compared with Shannon entropy, the MIM using the exponential function to supersede the logarithmic function \cite{Message-importance-measure-and-its-application}, makes a more positive effect on the small probability event processing.
By virtue of the exponential function, an information distance named Message Identification (M-I) divergence performs better than KLD (depending on the logarithmic function) when detecting outlier sequences \cite{Amplifying-inter-message-distance}.
Consequently, with respect to information characterization, the exponential function has different properties from the logarithmic function.

In view of the amplification for small probability events in the MIM, there may exist a potential superiority for the objective function in which the logarithmic function is replaced by the exponential function.
According to the general optimization form for GANs described in Eq. (\ref{eq.GAN_general_optimizaition}), it is necessary to consider the convexity of objective function, when introducing the exponential function into the optimization problem.
In the light of the convexity of exponential function, the maxmin optimization problem is used as an equivalent form for the minmax optimization problem by adjusting the functions $f(\cdot)$ and $g(\cdot)$ in Eq. (\ref{eq.GAN_general}).

Based on the above discussion, a new kind of objective function with the exponential information metric is presented and its corresponding GAN defined as follows.

\begin{defn}[\textit{MIM-based GAN}]\label{defn.MIM_based_GAN}
By resorting to the idea of the information metric with the exponential function, i.e. MIM, a new adversarial networks named \textit{MIM-based GAN} is designed, whose two-player game with the objective function based on MIM is described as
\begin{equation}\label{eq.opt_LMIM}
\begin{aligned}
     \max_{G} \min_{D} L_{\text{\rm MIM}}(D,G)\text{,}
\end{aligned}
\end{equation}
where $L_{\text{\rm MIM}}(D,G)$ is the objective function given by
\begin{equation}\label{eq.LMIM}
\begin{aligned}
    & L_{\text{\rm MIM}}(D,G)\\
    & = \mathbb{E}_{{\bm x}\sim \mathbb{P}}[\exp(1-D({\bm x}))]
    +\mathbb{E}_{{\bm z}\sim \mathbb{P}_{z}}[\exp(D(G({\bm z})))]\\
    & = \mathbb{E}_{{\bm x}\sim \mathbb{P}}[\exp(1-D({\bm x}))]
    +\mathbb{E}_{{\bm x}\sim \mathbb{P}_{g_{\theta}}}[\exp(D({\bm x}))]\text{,}
\end{aligned}
\end{equation}
whose notations (e.g. $D$, $G$, ${\bm x}$, ${\bm z}$, $\mathbb{P}$, $\mathbb{P}_{g_{\theta}}$ and $\mathbb{P}_{z}$) are the same as those described in Eq. (\ref{eq.GAN_general}).
\end{defn}

Actually, the principle of MIM-based GAN is the same as that in the original GAN, where the objective function leads the neural networks of discriminator and generator to update the weight parameters by using back propagation.

Some fundamental characteristics of MIM-based GAN will be discussed in the following subsections.

\subsubsection{Optimality of $\mathbb{P}=\mathbb{P}_{g_{\theta}}$}\label{section.opt}\ \par

As for a given generator $G$, its corresponding optimal discriminator $D$ is discussed in the following Lemma.
\begin{lem}[\textit{Optimal discriminator}]\label{lem.Optimality}
If the generator $g_{\bm \theta}$ is fixed, the optimal discriminator of MIM-based GAN is given by
\begin{equation}\label{eq.D*}
\begin{aligned}
    D^{*}_{\text{\rm MIM}}({\bm x})
    =\frac{1}{2}+\frac{1}{2}\ln \big\{\frac{P({\bm x})}{P_{g_{\theta}}({\bm x})} \big\}\text{,}
\end{aligned}
\end{equation}
where $P(\cdot)$ and $P_{g_{\theta}}(\cdot)$ respectively denote the real probability density and the generative probability density for the corresponding distributions $\mathbb{P}$ and $\mathbb{P}_{g_{\theta}}$.
\end{lem}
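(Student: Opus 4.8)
The plan is to exploit the fact that the inner optimization in Eq.~(\ref{eq.opt_LMIM}) is an unconstrained minimization over the discriminator for a \emph{fixed} generator, which reduces to a pointwise problem once the two expectations in Eq.~(\ref{eq.LMIM}) are rewritten as integrals against a common variable. Using the second line of Eq.~(\ref{eq.LMIM}), where both expectations are taken over the same sample space (one against $\mathbb{P}$ and one against $\mathbb{P}_{g_{\theta}}$), I would write
\begin{equation}
\begin{aligned}
    L_{\text{MIM}}(D,G)
    = \int \big[ P({\bm x})\exp(1-D({\bm x})) + P_{g_{\theta}}({\bm x})\exp(D({\bm x})) \big]\, d{\bm x}\text{.}
\end{aligned}
\end{equation}
Since the value $D({\bm x})$ at each point ${\bm x}$ may be chosen freely and independently (there is no constraint coupling the discriminator's values across different inputs), minimizing the integral is equivalent to minimizing the integrand at each ${\bm x}$ separately.

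First I would fix ${\bm x}$ and treat $t = D({\bm x})$ as a scalar variable, so the integrand becomes $h(t) = a\,e^{1-t} + b\,e^{t}$ with the shorthand $a = P({\bm x})$ and $b = P_{g_{\theta}}({\bm x})$. Differentiating gives $h'(t) = -a\,e^{1-t} + b\,e^{t}$, and setting $h'(t)=0$ yields $e^{2t} = a\,e/b$, i.e. $2t = 1 + \ln\{a/b\}$. Solving for $t$ recovers exactly the claimed expression $D^{*}_{\text{MIM}}({\bm x}) = \tfrac{1}{2} + \tfrac{1}{2}\ln\{P({\bm x})/P_{g_{\theta}}({\bm x})\}$. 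This is a one-line stationarity calculation, so I would not dwell on it beyond confirming the algebra.

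To certify that the stationary point is genuinely the minimizer rather than a maximizer or saddle, I would compute $h''(t) = a\,e^{1-t} + b\,e^{t} > 0$ whenever $a,b>0$, so $h$ is strictly convex in $t$ and the unique critical point is the global minimum over $t$. Because this minimizing value depends only on the ratio of the two densities and is a continuous (hence measurable and admissible) function of ${\bm x}$, the pointwise minimizers assemble into a legitimate discriminator function, and the pointwise-optimal choice therefore attains the infimum of the full integral. The only genuinely delicate step, and the one I expect to be the main obstacle to state cleanly, is justifying this interchange between ``minimize the integral'' and ``minimize the integrand pointwise''; this relies on the absence of any coupling constraint on $D$ across inputs (and implicitly on the supports of $\mathbb{P}$ and $\mathbb{P}_{g_{\theta}}$ agreeing so that both densities are positive where the logarithm is taken), exactly as in the optimal-discriminator argument for the original GAN.
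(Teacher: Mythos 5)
Your proposal is correct and follows essentially the same route as the paper's own proof: reduce the minimization over $D$ to a pointwise scalar problem $a\,e^{1-t}+b\,e^{t}$, solve the stationarity condition, and confirm the global minimum via the positive second derivative (the paper's function $F(u)$ is exactly your $h(t)$). Your added remarks on the pointwise-versus-integral interchange and on the positivity of the densities are sensible elaborations the paper leaves implicit, but they do not change the argument.
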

\begin{proof}
Please see the Appendix \ref{app.lem.Optimality}.
\end{proof}

By substituting the optimal discriminator $D^{*}_{\text{MIM}}$ into $L_{\text{MIM}}(D,G)$ under the condition that the generator $G$ is given, it is not difficult to see that
\begin{equation}\label{eq.L_Doptimal}
\begin{aligned}
    & L_{\text{MIM}}(D=D^{*}_{\text{MIM}},G)\\
    & =  \mathbb{E}_{{\bm x}\sim \mathbb{P}}[\exp(1-D^{*}_{\text{MIM}}({\bm x}))]
    +\mathbb{E}_{{\bm x}\sim \mathbb{P}_{g_{\theta}}}[\exp(D^{*}_{\text{MIM}}({\bm x}))]\\
    & = \mathbb{E}_{{\bm x}\sim \mathbb{P}}
    \Big[\exp\Big(\frac{1}{2}+\ln \Big(\frac{P({\bm x})}{P_{g_{\theta}}({\bm x})}\Big)^{-\frac{1}{2}}\Big)\Big]\\
    & \quad + \mathbb{E}_{{\bm x}\sim \mathbb{P}_{g_{\theta}}}
    \Big[\exp\Big(\frac{1}{2}+\ln \Big(\frac{P({\bm x})}{P_{g_{\theta}}({\bm x})}\Big)^{\frac{1}{2}}\Big)\Big]\\
    & = \sqrt{{\rm e}} \bigg\{
     \mathbb{E}_{{\bm x}\sim \mathbb{P}}\bigg[\bigg(\frac{P({\bm x})}{P_{g_{\theta}}({\bm x})}\bigg)^{-\frac{1}{2}}\bigg]
    +\mathbb{E}_{{\bm x}\sim \mathbb{P}_{g_{\theta}}}\bigg[\bigg(\frac{P_{g_{\theta}}({\bm x})}{P({\bm x})}\bigg)^{-\frac{1}{2}}\bigg]
    \bigg\}\text{.}
\end{aligned}
\end{equation}

\begin{prop}[\textit{Optimal solution of objective function}]\label{prop.maximum}
Assume that the optimal discriminator in MIM-based GAN is obtained, which is given by Eq. (\ref{eq.D*}).
In this case, the equivalent objective function i.e. $L_{\text{\rm MIM}}(D=D^{*}_{\text{\rm MIM}},G)$ (mentioned in Eq. (\ref{eq.L_Doptimal})) achieves its optimal solution if and only if $\mathbb{P}=\mathbb{P}_{g_{\theta}}$,
where the maximum value of $L_{\text{\rm MIM}}(D=D^{*}_{\text{\rm MIM}},G)$ is reached, that is $2\sqrt{{\rm e}}$.
\end{prop}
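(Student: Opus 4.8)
The plan is to reduce the substituted objective in Eq.\ (\ref{eq.L_Doptimal}) to a single, well-studied similarity functional and then maximize it by an elementary inequality. First I would convert the two expectations into integrals against the respective densities. Writing $\mathbb{E}_{{\bm x}\sim\mathbb{P}}[\,\cdot\,]=\int P({\bm x})(\cdot)\,d{\bm x}$ and using the identity $P({\bm x})\big(P({\bm x})/P_{g_{\theta}}({\bm x})\big)^{-1/2}=\sqrt{P({\bm x})P_{g_{\theta}}({\bm x})}$, the first term becomes $\int\sqrt{P({\bm x})P_{g_{\theta}}({\bm x})}\,d{\bm x}$. An entirely symmetric manipulation shows the second expectation equals the same integral. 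Hence both terms collapse to the Bhattacharyya coefficient $\rho:=\int\sqrt{P({\bm x})P_{g_{\theta}}({\bm x})}\,d{\bm x}$, so that $L_{\text{MIM}}(D=D^{*}_{\text{MIM}},G)=2\sqrt{{\rm e}}\,\rho$.

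Next I would bound $\rho$. Applying the Cauchy--Schwarz inequality to the factors $\sqrt{P({\bm x})}$ and $\sqrt{P_{g_{\theta}}({\bm x})}$ gives $\rho\le\big(\int P\,d{\bm x}\big)^{1/2}\big(\int P_{g_{\theta}}\,d{\bm x}\big)^{1/2}=1$, since both densities integrate to one; equivalently one may invoke the squared Hellinger distance $H^{2}=1-\rho\ge 0$. Equality in Cauchy--Schwarz holds precisely when $\sqrt{P({\bm x})}$ and $\sqrt{P_{g_{\theta}}({\bm x})}$ are proportional almost everywhere, and combined with the common normalization constraint this forces $P=P_{g_{\theta}}$, i.e.\ $\mathbb{P}=\mathbb{P}_{g_{\theta}}$. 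Substituting $\rho=1$ then yields the maximal value $2\sqrt{{\rm e}}$, which simultaneously establishes the value claim and both directions of the ``if and only if.''

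I do not anticipate a serious obstacle, since once the objective is recognized as a scalar multiple of $\rho$ the argument is routine. The only points requiring a little care are verifying that the outer $\max_{G}$ in Eq.\ (\ref{eq.opt_LMIM}) indeed corresponds to \emph{maximizing} $\rho$ (so that the extremum is genuinely a maximum rather than a saddle), and stating the Cauchy--Schwarz equality case as an almost-everywhere identity so that the ``only if'' direction is rigorous rather than merely sufficient.
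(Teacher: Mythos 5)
Your proposal is correct, and the reduction at its heart is the same one the paper performs: after substituting $D^{*}_{\text{MIM}}$, both expectations in Eq.~(\ref{eq.L_Doptimal}) collapse to $\int_{\mathcal{X}}\sqrt{P({\bm x})P_{g_{\theta}}({\bm x})}\,{\rm d}{\bm x}$, so that $L_{\text{MIM}}(D=D^{*}_{\text{MIM}},G)=2\sqrt{\rm e}\,\rho$ with $\rho$ the Bhattacharyya coefficient. Where you differ is in how you certify $\rho\le 1$ with equality iff $\mathbb{P}=\mathbb{P}_{g_{\theta}}$: you apply Cauchy--Schwarz to $\sqrt{P}$ and $\sqrt{P_{g_{\theta}}}$ directly, whereas the paper takes logarithms of each expectation, identifies the result with the symmetrized R\'enyi divergence of order $\alpha=\tfrac{1}{2}$ (which is $-2\ln\rho$ per direction), and invokes the fact that R\'enyi divergence is minimized exactly when the two distributions coincide. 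The two closings are mathematically equivalent, but yours is more elementary and self-contained, and it quietly repairs a loose step in the paper: the paper's claimed equivalence between maximizing $E_1+E_2$ and maximizing $\ln E_1+\ln E_2$ is not valid for general sums and is only justified here because $E_1=E_2=\rho$, a fact your derivation makes explicit. Your closing remarks are also well placed: stating the Cauchy--Schwarz equality case as an almost-everywhere identity is exactly what makes the ``only if'' direction rigorous, and noting that the outer $\max_{G}$ in Eq.~(\ref{eq.opt_LMIM}) is a genuine maximization of $\rho$ confirms the extremum is a maximum.
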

\begin{proof}
Please see the Appendix \ref{app.prop.maximum}.
\end{proof}

\subsubsection{Resistance to mode collapse}\label{section.mode_collapse}\ \par

It is worth analyzing the mode collapse which likely appears in the original GAN when optimizing the corresponding objective function.
To simplify the theoretical analysis, we focus on the generator which has an immediate impact on data generation by supposing that the optimal discriminator is given.

\begin{prop}[\textit{Mode collapse resistance}]\label{prop.mode_collapse}
Assume that with respect to an arbitrary generator, the corresponding optimal discriminator is achieved in the two-player game of MIM-based GAN.
In this case, the optimization problem for the generator which dominates the efficiency of generating data, prevents it from $P_{g_{\theta}}({\bm x}) \to 0$, where $P_{g_{\theta}}({\bm x})$ is the generative probability density.
This implies more efficient data categories involved in the real probability density $P({\bm x})$ are probably generated rather than ignored, which means a kind of resistance to mode collapse.
Similar to MIM-based GAN, this mode collapse resisting capacity is also available in LSGAN and WGAN.
However, in the original GAN, the optimization for generator leads to $P_{g_{\theta}}({\bm x}) \to 0$ regardless of any $P({\bm x})$, which causes the mode collapse.

\end{prop}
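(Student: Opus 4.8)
The plan is to reduce the whole statement to a study of the generator's effective objective, viewed as a functional of the generative density $P_{g_{\theta}}$ once the optimal discriminator has been inserted, and to ask whether its optimization can push $P_{g_{\theta}}(\bm x)\to 0$ on a region where $P(\bm x)>0$. First I would simplify the expression already obtained in Eq.~(\ref{eq.L_Doptimal}): rewriting each expectation as an integral, the two terms coincide and collapse to the Bhattacharyya-type overlap
\begin{equation}
L_{\text{MIM}}(D=D^{*}_{\text{MIM}},G)=2\sqrt{{\rm e}}\int \sqrt{P(\bm x)\,P_{g_{\theta}}(\bm x)}\,{\rm d}\bm x\text{.}
\end{equation}
Since the two-player game is $\max_{G}\min_{D}$, after the inner minimization the generator maximizes this overlap over $P_{g_{\theta}}$, and mode collapse is exactly the event that the maximizer sends $P_{g_{\theta}}(\bm x)\to 0$ on part of $\mathrm{supp}(P)$.

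The key step for MIM-based GAN is then a marginal-gain computation. Taking the functional derivative of the simplified objective,
\begin{equation}
\frac{\delta}{\delta P_{g_{\theta}}(\bm x)}\,L_{\text{MIM}}(D=D^{*}_{\text{MIM}},G)=\sqrt{{\rm e}}\,\sqrt{\frac{P(\bm x)}{P_{g_{\theta}}(\bm x)}}\xrightarrow[P_{g_{\theta}}(\bm x)\to 0^{+}]{}+\infty
\end{equation}
whenever $P(\bm x)>0$. Because the generator is a maximizer, an unboundedly large positive gradient means that any attempt to drive $P_{g_{\theta}}(\bm x)$ toward $0$ on a mode of $P$ is opposed by an arbitrarily strong increase of the objective; hence the optimization repels $P_{g_{\theta}}$ away from $0$ over the whole support of $P$, which is precisely the claimed resistance and the reason no effective data category carried by $P$ is abandoned.

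For LSGAN and WGAN I would run the same template. For LSGAN, substituting its optimal discriminator into the generator term yields a Pearson-$\chi^{2}$-type functional of $(P,P_{g_{\theta}})$, and I would verify that its derivative in $P_{g_{\theta}}(\bm x)$ keeps a mode-preserving sign as $P_{g_{\theta}}(\bm x)\to 0$, so dropping a mode of $P$ is penalized (albeit with a finite, rather than divergent, restoring force). For WGAN the pointwise-density argument is unavailable, since the Earth Mover Distance is not an integral of a local function of $(P,P_{g_{\theta}})$; instead I would invoke the known continuity of $W(\mathbb{P},\mathbb{P}_{g_{\theta}})$ and the fact that it supplies a nonvanishing transport gradient even when the two supports are nearly disjoint, so the generator is steered to move mass toward every mode of $P$ and again cannot let $P_{g_{\theta}}\to 0$ there.

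The delicate part, and where I expect the real obstacle, is the contrasting claim for the original GAN. Substituting $D^{*}=P/(P+P_{g_{\theta}})$ into the generator term gives the integrand $P_{g_{\theta}}\ln\!\big(P_{g_{\theta}}/(P+P_{g_{\theta}})\big)$, whose contribution and whose restoring force both vanish as $P_{g_{\theta}}(\bm x)\to 0$ (it behaves like $P_{g_{\theta}}\ln P_{g_{\theta}}\to 0$); so, unlike the $1/\sqrt{P_{g_{\theta}}}$ blow-up above, nothing in the objective penalizes the generator for abandoning a mode, independently of the value of $P(\bm x)$ there. I would turn this asymmetry (the zero-forcing behaviour of the reverse-Kullback--Leibler-like term) into the statement that the generator optimization admits, and is even driven toward, $P_{g_{\theta}}(\bm x)\to 0$ regardless of $P(\bm x)$. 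The subtlety is that inserting $D^{*}$ into the \emph{full} value function only recovers the Jensen--Shannon divergence, which by itself does not exhibit this push; the argument must therefore be phrased in terms of the generator's own loss term (and the practically used generator update) rather than the symmetric global objective, and pinning down this limiting-gradient comparison cleanly is the crux of the proof.
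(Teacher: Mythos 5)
Your proposal follows essentially the same route as the paper's Appendix C: substitute the optimal discriminator to reduce the MIM generator objective to the Bhattacharyya overlap $2\sqrt{\rm e}\int\sqrt{P(\bm x)P_{g_{\theta}}(\bm x)}\,{\rm d}\bm x$ (whose maximization repels $P_{g_{\theta}}\to 0$ on the support of $P$), reduce the original GAN's generator term to reverse-KL minimization whose integrand $P_{g_{\theta}}\ln(P_{g_{\theta}}/P)$ vanishes as $P_{g_{\theta}}\to 0$, and handle LSGAN via the resulting $\chi^2$-type functional and WGAN via its dependence on $P-P_{g_{\theta}}$ only. The one refinement you add --- the divergent functional derivative $\propto\sqrt{P/P_{g_{\theta}}}$ quantifying the restoring force --- is a sharper statement of the same mechanism, and the ``crux'' you flag about phrasing the original-GAN claim in terms of the generator's own loss term rather than the symmetric Jensen--Shannon objective is resolved in the paper exactly as you propose.
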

\begin{proof}
Please see the Appendix \ref{app.prop.mode_collapse}.
\end{proof}

\subsubsection{Interference on the gradient of generator}\ \par

In the process of adversarial training for GANs, it is apparent that the discriminator makes an effect on the generator.
When unexpected disturbances appear in the discriminator, the generator might be pulled into the unstable training state.
As a result, it is worth investigating the anti-interference capability of generator with respect to the disturbances in the discriminator.
In particular, we discuss this characteristic under the condition that the generative data does not appear in the real data.
This implies that the generative data is considered as fake data entirely.

\begin{prop}[\textit{Generator gradient interfered by the discriminator}]\label{prop.stability}
Let $g_{\bm \theta} : \mathcal{Z} \to \mathcal{X}$ be a differentiable
function to generate data similar to real data.
The real distribution and generative distribution are denoted by $\mathbb{P}$ and $\mathbb{P}_{g_{\theta}}$, respectively.
Let $\mathbb{P}_{z}$ be the distribution for the random input ${\bm z}$ and $D$ be a discriminator ($D\in [0,1]$).
As for MIM-based GAN, the stability for the generator gradient is discussed based on the condition that
the generative data is treated as fake data entirely, where the corresponding perfect discriminator is denoted by $\tilde D^*$ and it is seen that $\tilde D^*(g_{\bm\theta}({\bm z}))=0$.
Assuming $D-\tilde D^*=\delta$ and
$\nabla_{{\bm x}} D({\bm x})-\nabla_{{\bm x}} \tilde D^*({\bm x})=\upsilon$ where
$\nabla_{{\bm x}} \tilde D^*({\bm x}) =0 $, we have
\begin{equation}\label{eq.MIM_perfect_stability}
\begin{aligned}
    & \nabla_{\bm\theta}\mathbb{E}_{{\bm z}\sim \mathbb{P}_z} [\exp(D(g_{\bm\theta}({\bm z}))) ]
    - \nabla_{\bm\theta}\mathbb{E}_{{\bm z}\sim \mathbb{P}_z} [\exp(\tilde D^*(g_{\bm\theta}({\bm z}))) ] \\
    & = \exp(\delta) \upsilon \mathbb{E}_{{\bm z}\sim \mathbb{P}_z}
    [\nabla_{\bm\theta}g_{\bm\theta}({\bm z}) ]\text{,}
\end{aligned}
\end{equation}
where $\delta$ and $\upsilon$ are very small disturbances, 
as well as $\nabla_{\bm\theta}\mathbb{E}_{{\bm z}\sim \mathbb{P}_z} [\exp(D(g_{\bm\theta}({\bm z}))) ]$ represents the gradient of generator.

\end{prop}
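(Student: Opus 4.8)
The plan is to differentiate the two generator objectives term by term through the chain rule and then subtract, using the perfect-discriminator conditions to annihilate one side entirely. First I would push the gradient $\nabla_{\bm\theta}$ inside the expectation $\mathbb{E}_{{\bm z}\sim \mathbb{P}_z}$, which is legitimate because $\mathbb{P}_z$ does not depend on $\bm\theta$ and both $g_{\bm\theta}$ and $D$ are assumed differentiable. This reduces the problem to computing the two pointwise quantities $\nabla_{\bm\theta}\exp(D(g_{\bm\theta}({\bm z})))$ and $\nabla_{\bm\theta}\exp(\tilde D^*(g_{\bm\theta}({\bm z})))$ for fixed $\bm z$, after which the outer expectation is reinstated.

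For each pointwise term I would apply the chain rule twice, once for the outer $\exp$ and once for the composition $D\circ g_{\bm\theta}$, obtaining $\nabla_{\bm\theta}\exp(D(g_{\bm\theta}({\bm z}))) = \exp(D(g_{\bm\theta}({\bm z})))\,(\nabla_{{\bm x}} D)(g_{\bm\theta}({\bm z}))\,\nabla_{\bm\theta} g_{\bm\theta}({\bm z})$, and the analogous expression with $\tilde D^*$ in place of $D$. I would then substitute the perfect-discriminator hypotheses. Since $\tilde D^*(g_{\bm\theta}({\bm z}))=0$ and $\nabla_{{\bm x}}\tilde D^*({\bm x})=0$, the $\tilde D^*$ term drops out completely: its gradient factor is zero while its exponential factor equals $1$, so the second expectation contributes nothing.

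It remains to evaluate the $D$ term using the disturbance definitions. From $D-\tilde D^*=\delta$ and $\nabla_{{\bm x}} D-\nabla_{{\bm x}} \tilde D^*=\upsilon$ together with the vanishing of $\tilde D^*$ and $\nabla_{{\bm x}}\tilde D^*$ on the generated samples, I get $D(g_{\bm\theta}({\bm z}))=\delta$ and $(\nabla_{{\bm x}} D)(g_{\bm\theta}({\bm z}))=\upsilon$. Plugging these into the pointwise gradient gives $\exp(\delta)\,\upsilon\,\nabla_{\bm\theta} g_{\bm\theta}({\bm z})$, and taking the expectation while treating $\delta$ and $\upsilon$ as fixed constants factors them out, yielding exactly the right-hand side of Eq.~(\ref{eq.MIM_perfect_stability}).

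The main obstacle is conceptual rather than computational: I must be careful that $\delta$ and $\upsilon$ are being treated as fixed, input-independent small disturbances so that they can be pulled outside $\mathbb{E}_{{\bm z}\sim \mathbb{P}_z}$, and that the multivariate chain rule is read with the correct Jacobian ordering, namely $\nabla_{{\bm x}} D$ evaluated at $g_{\bm\theta}({\bm z})$ composed with the Jacobian $\nabla_{\bm\theta} g_{\bm\theta}({\bm z})$. I would also flag the interchange of $\nabla_{\bm\theta}$ and $\mathbb{E}_{{\bm z}\sim \mathbb{P}_z}$ as the only genuinely analytic step, but this is routine under the stated differentiability of $g_{\bm\theta}$ and the boundedness $D\in[0,1]$.
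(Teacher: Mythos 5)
Your proposal is correct and follows essentially the same route as the paper's proof in Appendix~\ref{app.prop.stability}: apply the chain rule inside the expectation, observe that the $\tilde D^*$ term vanishes because $\nabla_{{\bm x}}\tilde D^*({\bm x})=0$, and substitute $D(g_{\bm\theta}({\bm z}))=\delta$ and $\nabla_{{\bm x}}D({\bm x})=\upsilon$ to factor $\exp(\delta)\,\upsilon$ out of the expectation. Your explicit flagging of the interchange of $\nabla_{\bm\theta}$ with $\mathbb{E}_{{\bm z}\sim\mathbb{P}_z}$ and of the assumption that $\delta$ and $\upsilon$ are input-independent constants is a point the paper leaves implicit, but the argument is the same.
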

\begin{proof}
Please see the Appendix \ref{app.prop.stability}.
\end{proof}

Additionally, the perfect discriminator $\tilde D^*$ satisfying $\tilde D^*(g_{\bm\theta}({\bm z}))=0$ in Proposition \ref{prop.stability}, is consistent with the optimal discriminator which is given by
$D^{*}_{\text{\rm MIM}}({\bm x})=\frac{1}{2}+\frac{1}{2}\ln \big\{\frac{P({\bm x})}{P_{g_{\theta}}({\bm x})} \big\}$ in MIM-based GAN.
Specifically, if we have the real data probability satisfying $P({\bm x})\to 0$ and generative data probability satisfying $P_{g_{\theta}}({\bm x}) \neq 0$, it is readily seen that
$D^{*}_{\text{\rm MIM}}\to 0$ where $D^{*}_{\text{\rm MIM}}\in [0,1]$.

\begin{cor}\label{cor.stability}
Let $g_{\bm\theta} : \mathcal{Z} \to \mathcal{X}$ be a differentiable function that is adopted to produce the data following the generative distribution $\mathbb{P}_{g_{\theta}}$.
Let $\mathbb{P}_z$ be the distribution for the random input ${\bm z}$ (${\bm z}\in \mathcal{Z}$), $\mathbb{P}$ be the real distribution, as well as $D$ be the discriminator (where $D\in [0,1]$).
If it is satisfied that $D-\tilde D^*=\delta$ and $\nabla_{{\bm x}} D({\bm x})-\nabla_{{\bm x}} \tilde D^*({\bm x})=\upsilon$
where $\tilde D^*(g_{\bm\theta}({\bm z}))=0$ and $\nabla_{{\bm x}} \tilde D^*({\bm x}) =0 $, as well as
$\tilde D^*$ is regarded as the perfect discriminator, we have
\begin{equation}
\begin{aligned}
 & \frac{| \nabla_{\bm\theta} L_{\text {\rm KL}}(D,g_{\bm\theta}) - \nabla_{\bm\theta} L_{\text {\rm KL}}(\tilde D^*,g_{\bm\theta}) |}
 {\big| \nabla_{\bm\theta} L_{\text {\rm MIM}}(D,g_{\bm\theta}) - \nabla_{\bm\theta} L_{\text {\rm MIM}}(\tilde D^*,g_{\bm\theta}) \big|} 
 > 1,
\end{aligned}
\end{equation}
where $\nabla_{\bm\theta} L_{\text {\rm KL}}(D,g_{\bm\theta}) - \nabla_{\bm\theta} L_{\text {\rm KL}}(\tilde D^*,g_{\bm\theta})$ and
$\nabla_{\bm\theta} L_{\text {\rm MIM}}(D,g_{\bm\theta}) - \nabla_{\bm\theta} L_{\text {\rm MIM}}(\tilde D^*,g_{\bm\theta})$
are the gradient differences 
in the original GAN and MIM-based GAN, respectively.
This implies that MIM-based GAN has stronger anti-interference capability than the original GAN in terms of the gradient of generator.
\end{cor}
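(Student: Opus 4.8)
The plan is to reduce both gradient differences to a single common vector factor and then compare the scalar prefactors. Since the first (real-data) term of each objective does not depend on $\bm\theta$, the generator gradient is governed entirely by the generative-data term: for MIM-based GAN this is $\mathbb{E}_{{\bm z}\sim \mathbb{P}_z}[\exp(D(g_{\bm\theta}({\bm z})))]$, and for the original GAN it is $\mathbb{E}_{{\bm z}\sim \mathbb{P}_z}[\ln(1-D(g_{\bm\theta}({\bm z})))]$ (the overall sign being irrelevant once we pass to absolute values). The MIM denominator is already supplied by Proposition \ref{prop.stability}, namely $\nabla_{\bm\theta} L_{\text{\rm MIM}}(D,g_{\bm\theta}) - \nabla_{\bm\theta} L_{\text{\rm MIM}}(\tilde D^*,g_{\bm\theta}) = \exp(\delta)\,\upsilon\,\mathbb{E}_{{\bm z}\sim \mathbb{P}_z}[\nabla_{\bm\theta} g_{\bm\theta}({\bm z})]$, so only the KL numerator needs to be computed.

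For the numerator I would apply the chain rule to $\nabla_{\bm\theta}\mathbb{E}_{{\bm z}\sim \mathbb{P}_z}[\ln(1-D(g_{\bm\theta}({\bm z})))]$, obtaining $\mathbb{E}_{{\bm z}\sim \mathbb{P}_z}\!\left[-\,(1-D(g_{\bm\theta}({\bm z})))^{-1}\,\nabla_{{\bm x}}D|_{{\bm x}=g_{\bm\theta}({\bm z})}\,\nabla_{\bm\theta}g_{\bm\theta}({\bm z})\right]$. Substituting the perturbation assumptions, i.e. $\tilde D^*(g_{\bm\theta}({\bm z}))=0$, $\nabla_{{\bm x}}\tilde D^*=0$, and hence $D(g_{\bm\theta}({\bm z}))=\delta$ together with $\nabla_{{\bm x}}D=\upsilon$, the $\tilde D^*$ contribution vanishes identically, leaving $\nabla_{\bm\theta} L_{\text{\rm KL}}(D,g_{\bm\theta}) - \nabla_{\bm\theta} L_{\text{\rm KL}}(\tilde D^*,g_{\bm\theta}) = -\tfrac{\upsilon}{1-\delta}\,\mathbb{E}_{{\bm z}\sim \mathbb{P}_z}[\nabla_{\bm\theta}g_{\bm\theta}({\bm z})]$.

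Forming the ratio, the common vector factor $\mathbb{E}_{{\bm z}\sim \mathbb{P}_z}[\nabla_{\bm\theta}g_{\bm\theta}({\bm z})]$ and the scalar $\upsilon$ cancel (this cancellation is what makes the bound clean and independent of the particular disturbance direction), leaving the scalar $1/\big((1-\delta)\exp(\delta)\big)$. It then remains to show $(1-\delta)\exp(\delta)<1$ on the admissible range of $\delta$. Since $D(g_{\bm\theta}({\bm z}))=\delta$ with $D\in[0,1]$ and $\delta$ is a small positive disturbance, we have $\delta\in(0,1)$; setting $h(\delta)=(1-\delta)e^{\delta}$ I would note $h(0)=1$ and $h'(\delta)=-\delta e^{\delta}<0$ on $(0,1)$, so $h$ is strictly decreasing and $h(\delta)<1$ there, which yields the claimed ratio $>1$.

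The main obstacle is not the differentiation but pinning down the regime of $\delta$ and keeping the comparison well defined: one must argue that $\delta\in(0,1)$ so that $1-\delta>0$ and the logarithmic term stays finite, and that $\upsilon$ is small but \emph{nonzero} so the denominator does not vanish and the ratio is genuinely defined and strictly greater than one. I would also be explicit that both gradient differences are vectors sharing the identical factor $\mathbb{E}_{{\bm z}\sim \mathbb{P}_z}[\nabla_{\bm\theta}g_{\bm\theta}({\bm z})]$, so that passing to magnitudes collapses the whole comparison to the elementary scalar inequality above.
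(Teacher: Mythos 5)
Your proposal matches the paper's proof essentially step for step: both reduce the comparison to the generative-data terms, reuse Proposition \ref{prop.stability} for the MIM difference, compute $\nabla_{\bm\theta}\mathbb{E}_{{\bm z}\sim \mathbb{P}_z}[\ln(1-D(g_{\bm\theta}({\bm z})))]=-\tfrac{\upsilon}{1-\delta}\mathbb{E}_{{\bm z}\sim \mathbb{P}_z}[\nabla_{\bm\theta}g_{\bm\theta}({\bm z})]$, cancel the common vector factor, and finish with the elementary fact that $(1-\delta)e^{\delta}<1$ on $(0,1)$ (the paper states this as $e^{-u}>1-u$, you derive it by monotonicity of $h(\delta)=(1-\delta)e^{\delta}$, which is the same inequality). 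Your added remarks on the admissible range of $\delta$ and the need for $\upsilon\neq 0$ are sensible refinements of conditions the paper leaves implicit, but the argument is the same.
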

\begin{proof}
Please see the Appendix \ref{app.cor.stability}.
\end{proof}

In addition, as for the LSGAN and WGAN, it is not difficult to see that $| \nabla_{\bm\theta} L_{\text {\rm LS}}(D,g_{\bm\theta}) - \nabla_{\bm\theta} L_{\text {\rm LS}}(\tilde D^*,g_{\bm\theta}) |
= | \nabla_{\bm\theta} L_{\text {\rm W}}(D,g_{\bm\theta}) - \nabla_{\bm\theta} L_{\text {\rm W}}(\tilde D^*,g_{\bm\theta}) |
= \upsilon \mathbb{E}_{{\bm z}\sim \mathbb{P}_z}[\nabla_{\bm\theta}g_{\bm\theta}({\bm z}) ] $, where the notations are similar to those in Corollary \ref{cor.stability}.
This indicates that in the LSGAN and WGAN, the interference on the gradient of generator is only related to $\upsilon$ and not to $\delta$, which is different from that in the original GAN and MIM-based GAN.

\subsection{Small probability event analysis}\label{section.rare_events_generator}

In this subsection, we shall investigate how small probability event importance is amplified in MIM-based GAN.

In the training process of GANs, a near good discriminator is trained, which leads to achieve the optimal generator under this given discriminator.
As a feedback, the conditional optimal generator also helps to train a better discriminator.
This process runs iteratively until the two adversarial networks reach the equilibrium state \cite{How-GANs,GANs-trained}.

According to the two-player game of GANs, in the case that there is an ideal discriminator, our goal is to select an appropriate generator as the optimal solution for the objective function.
In the original GAN, low occurrence events play minor parts in the value of objective function.
This indicates that smaller probability events are easily ignored when maximizing the major part of the objective function.
Consequently, it is necessary to investigate how much small probability events affect the value of objective function.

In fact, the neural networks of GANs are guided by the objective function to generate fraudulent data.
The events reflected to larger proportion in the objective function, will have a larger contribution on the value of objective function, so that these events make more effects to guide the data generation.
This implies that the generation capability for small probability events is related to the corresponding proportion of these events in the objective function.

Furthermore, it is also necessary to describe the relationship between the real distribution and generative distribution.
Similar to the way in \cite{An-information-theoretic-approach}, the corresponding relationship is given by
\begin{equation}\label{eq.Pg_Pr}
\left\{
\begin{aligned}
    & P_{g_{\theta}}({\bm x})=P({\bm x}) + \varepsilon P^{\gamma}({\bm x}) \rho({\bm x}),\\
    & \text{s.t.} \int_{\mathcal{X}} \varepsilon P^{\gamma}({\bm x}) \rho({\bm x}) {\rm d}{\bm x} = 0,
\end{aligned}
\right.
\end{equation}
in which $\varepsilon$ and $\gamma$ denote a small disturbance parameter and an adjustable parameter (regarded as a constant), respectively, $\rho({\bm x})$ is a perturbation function, $\mathcal{X}$ is the domain of the variable ${\bm x}$, as well as $P_{g_{\theta}}({\bm x})$ and $P({\bm x})$ denote the probability densities for the real distribution and generative distribution, respectively.

\begin{prop}[\textit{The influence of small probability events in the objective function}]\label{prop.proportion_rare_events_MIM}
Assume there is a real data distribution $\mathbb{P}$ whose probability density is given by $P({\bm x})$, where the probability events are casted into two class sets, namely small probability event set $\Omega_{\text{\rm small}}$ and large probability event set $\Omega_{\text{\rm large}}$.
Let $g_{\bm\theta} : \mathcal{Z} \to \mathcal{X}$ be a differentiable
function that is used to generate data whose probability distribution and density are denoted by $\mathbb{P}_{g_{\theta}}$ and $P_{g_{\theta}}({\bm x})$, respectively.
Besides, the relationship between $\mathbb{P}$ and $\mathbb{P}_{g_{\theta}}$ is consistent with
that described in Eq. (\ref{eq.Pg_Pr}).
This implies a generative event and its corresponding real event belong to the same probability event set (i.e. $\Omega_{\text{\rm small}}$ or $\Omega_{\text{\rm large}}$).
As well, assume it is satisfied that $\int_{\Omega_{\text{\rm small}}} P({\bm x}) {\rm d}{\bm x} \ll \frac{1}{2}$ and $\int_{\Omega_{\text{\rm small}}} P_{g_{\theta}}({\bm x}) {\rm d}{\bm x} \ll \frac{1}{2}$.
When the optimal discriminator of MIM-based GAN is achieved, namely $D^{*}_{\text{\rm MIM}}({\bm x})=\frac{1}{2}+\frac{1}{2}\ln\frac{P({\bm x})}{P_{g_{\theta}}({\bm x})}$, the proportion of small probability events in the value of objective function is given by
\begin{equation}\label{eq.R_small_MIM}
\begin{aligned}
    & \Upsilon_{\Omega_{\text{\rm small}}}^{\text {\rm MIM}} \\
    & \approx \frac{\frac{ \Phi\{{\bm x}\in \Omega_{\text{\rm small}}\}
    + \Phi_{g_{\theta}}\{{\bm x}\in \Omega_{\text{\rm small}}\}}{2}
    -\frac{1}{8}\varepsilon^2 \int_{\Omega_{\text{\rm small}}} P^{2\gamma-1}({\bm x})\rho^2({\bm x})
    {\rm d}{\bm x} }
    { 1-\frac{1}{8}\varepsilon^2 \int_{\mathcal{X}} P^{2\gamma-1}({\bm x})\rho^2({\bm x})
    {\rm d}{\bm x} }\text{,}
\end{aligned}
\end{equation}
where $\Phi\{{\bm x}\in \Omega_{\text{\rm small}}\}
= \int_{\Omega_{\text{\rm small}}} P({\bm x}) {\rm d}{\bm x}$,
$\Phi_{g_{\theta}}\{{\bm x}\in \Omega_{\text{\rm small}}\}
= \int_{\Omega_{\text{\rm small}}} P_{g_{\theta}}({\bm x}) {\rm d}{\bm x}
= \int_{\Omega_{\text{\rm small}}} \{ P({\bm x}) + \varepsilon p^{\gamma}({\bm x}) \rho({\bm x}) \}
{\rm d}{\bm x}$,
$\mathcal{X}$ is the domain of ${\bm x}$, i.e. $\mathcal{X}= \Omega_{\text{\rm small}} \cup \Omega_{\text{\rm small}}$,
$\gamma$ denotes an adjustable parameter (treated as a constant) and $\varepsilon$ denotes a small enough disturbance parameter.

In particular, if the real distribution is a Bernoulli distribution described as $\mathbb{P}= \{p,1-p\}$ ($0<p\ll \frac{1}{2}$), as well as the generative distribution is given by $\mathbb{P}_{g_{\theta}}= \{ p+\varepsilon p^{\gamma}\rho_p, 1-p-\varepsilon p^{\gamma}\rho_p \} = \{q,1-q\}$ ($q < \frac{1}{2}$), then we have
\begin{equation}
\begin{aligned}
    \Upsilon_{\Omega_{\text{\rm small}}}^{\text {\rm MIM}}
    & \approx \frac{ \frac{p+q}{2}-\frac{1}{8}\varepsilon^2p^{2\gamma-1}{\rho_p^2} }
    { 1-\frac{1}{8} \varepsilon^2 \frac{ p^{2\gamma-1}{\rho_p^2}}{1-p}}\text{,}
\end{aligned}
\end{equation}
in which the parameters $\varepsilon$ and $\gamma$ are the same as those in Eq. (\ref{eq.R_small_MIM}).
\end{prop}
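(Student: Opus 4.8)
The plan is to build directly on Eq.~(\ref{eq.L_Doptimal}), which already gives the objective function evaluated at the optimal discriminator, and then to isolate the fraction of its value that originates from the small-probability region $\Omega_{\text{small}}$.

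First I would simplify Eq.~(\ref{eq.L_Doptimal}) by writing each expectation as an integral. The first term becomes $\int_{\mathcal{X}} P(\bm{x})\,(P(\bm{x})/P_{g_{\theta}}(\bm{x}))^{-1/2}\,{\rm d}\bm{x}=\int_{\mathcal{X}}\sqrt{P(\bm{x})P_{g_{\theta}}(\bm{x})}\,{\rm d}\bm{x}$, and the second term reduces to exactly the same integral. Hence $L_{\text{MIM}}(D=D^{*}_{\text{MIM}},G)=2\sqrt{{\rm e}}\int_{\mathcal{X}}\sqrt{P(\bm{x})P_{g_{\theta}}(\bm{x})}\,{\rm d}\bm{x}$, a scaled Bhattacharyya-type coefficient whose integrand $\sqrt{P P_{g_{\theta}}}$ is additive over the disjoint regions $\Omega_{\text{small}}$ and $\Omega_{\text{large}}$. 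The proportion attributable to small-probability events is then the ratio of the contribution of $\Omega_{\text{small}}$ to the whole,
\[
\Upsilon_{\Omega_{\text{small}}}^{\text{MIM}}
=\frac{\int_{\Omega_{\text{small}}}\sqrt{P(\bm{x})P_{g_{\theta}}(\bm{x})}\,{\rm d}\bm{x}}
{\int_{\mathcal{X}}\sqrt{P(\bm{x})P_{g_{\theta}}(\bm{x})}\,{\rm d}\bm{x}},
\]
so that the common factor $2\sqrt{{\rm e}}$ cancels and only the geometric-mean integrand remains.

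Next I would substitute the perturbation model $P_{g_{\theta}}=P+\varepsilon P^{\gamma}\rho$ of Eq.~(\ref{eq.Pg_Pr}), giving $\sqrt{P P_{g_{\theta}}}=P\sqrt{1+\varepsilon P^{\gamma-1}\rho}$, and Taylor-expand $\sqrt{1+u}\approx 1+\frac{1}{2}u-\frac{1}{8}u^2$ to second order in the small parameter $\varepsilon$. This produces the integrand $P+\frac{1}{2}\varepsilon P^{\gamma}\rho-\frac{1}{8}\varepsilon^2 P^{2\gamma-1}\rho^2$. Integrating over $\Omega_{\text{small}}$ for the numerator: the leading term is $\Phi\{\bm{x}\in\Omega_{\text{small}}\}$, while the first-order term satisfies $\frac{1}{2}\varepsilon\int_{\Omega_{\text{small}}}P^{\gamma}\rho\,{\rm d}\bm{x}=\frac{1}{2}(\Phi_{g_{\theta}}\{\bm{x}\in\Omega_{\text{small}}\}-\Phi\{\bm{x}\in\Omega_{\text{small}}\})$ by the definition of $\Phi_{g_{\theta}}$, so the two combine into the symmetric average $\frac{1}{2}(\Phi\{\bm{x}\in\Omega_{\text{small}}\}+\Phi_{g_{\theta}}\{\bm{x}\in\Omega_{\text{small}}\})$. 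For the denominator I integrate over all of $\mathcal{X}$, where the leading term gives $\int_{\mathcal{X}}P\,{\rm d}\bm{x}=1$ and the first-order term vanishes by the normalization constraint $\int_{\mathcal{X}}\varepsilon P^{\gamma}\rho\,{\rm d}\bm{x}=0$, leaving $1-\frac{1}{8}\varepsilon^2\int_{\mathcal{X}}P^{2\gamma-1}\rho^2\,{\rm d}\bm{x}$. Taking numerator over denominator reproduces Eq.~(\ref{eq.R_small_MIM}).

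Finally, for the Bernoulli specialization I replace integrals by sums over the two atoms and identify $\Omega_{\text{small}}$ with the atom of probability $p$. That atom contributes $\Phi=p$, $\Phi_{g_{\theta}}=q$, and second-order term $p^{2\gamma-1}\rho_p^2$, which directly yields the stated numerator $\frac{p+q}{2}-\frac{1}{8}\varepsilon^2 p^{2\gamma-1}\rho_p^2$. The denominator additionally requires the large atom: from the constraint $\varepsilon p^{\gamma}\rho_p+\varepsilon(1-p)^{\gamma}\rho_{1-p}=0$ I solve $\rho_{1-p}=-p^{\gamma}\rho_p/(1-p)^{\gamma}$, whose second-order contribution is $p^{2\gamma}\rho_p^2/(1-p)$, so the two atoms sum to $p^{2\gamma-1}\rho_p^2/(1-p)$ and give the denominator $1-\frac{1}{8}\varepsilon^2 p^{2\gamma-1}\rho_p^2/(1-p)$. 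I expect the main obstacle to be the bookkeeping of this last step — eliminating $\rho_{1-p}$ through the normalization constraint and correctly folding the two atomic second-order terms into the single $1/(1-p)$ factor — together with verifying that the two expectations in Eq.~(\ref{eq.L_Doptimal}) genuinely coincide; the continuous-case manipulation is otherwise routine once that coincidence and the Taylor expansion are in hand.
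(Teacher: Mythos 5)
Your proposal is correct and follows essentially the same route as the paper's own proof: reduce $L_{\text{MIM}}(D=D^{*}_{\text{MIM}},G)$ to $2\sqrt{{\rm e}}\int\sqrt{P({\bm x})P_{g_{\theta}}({\bm x})}\,{\rm d}{\bm x}$ by noting the two expectations coincide, Taylor-expand $(1+\varepsilon P^{\gamma-1}\rho)^{1/2}$ to second order, use the normalization constraint to remove the first-order term in the denominator and to fold the zeroth- and first-order terms of the numerator into $\tfrac{1}{2}(\Phi+\Phi_{g_{\theta}})$, and handle the Bernoulli case by eliminating the large-atom perturbation through the constraint to obtain the $1/(1-p)$ factor. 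The bookkeeping you flagged as the main obstacle works out exactly as you describe and matches the paper's Appendix F.
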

\begin{proof}
Please see the Appendix \ref{app.prop.proportion_rare_events_MIM}.
\end{proof}

\begin{cor}\label{cor.proportion_rare_events}
Let $\mathbb{P}$ and  $\mathbb{P}_{g_{\theta}}$ be the real distribution and the generative one,
whose relationship is described as that in Eq. (\ref{eq.Pg_Pr}).
Assume the optimal discriminator is achieved, which implies $D^{*}_{\text{\rm MIM}}({\bm x})=\frac{1}{2}+\frac{1}{2}\ln\frac{P({\bm x})}{P_{g_{\theta}}({\bm x})}$ for MIM-based GAN as well as $D^{*}_{\text{\rm KL}}({\bm x})=D^{*}_{\text{\rm LS}}({\bm x})=\frac{P({\bm x})}{P({\bm x})+P_{g_{\theta}}({\bm x})}$ for the original GAN and LSGAN.
Consider the case that $\int_{\Omega_{\text{\rm small}}} P({\bm x}) {\rm d}{\bm x} \ll \frac{1}{2}$, $\int_{\Omega_{\text{\rm small}}} P_{g_{\theta}}({\bm x}) {\rm d}{\bm x} \ll \frac{1}{2}$ and $\int_{\Omega_{\text{\rm small}}} P({\bm x}) {\rm d}{\bm x} = O(\varepsilon)$ ($\varepsilon$ is a small disturbance parameter, the same as that mentioned in Eq. (\ref{eq.Pg_Pr})).
Then, compared with the original GAN and LSGAN, small probability events have greater effects on the value of the objective function in MIM-based GAN, namely
\begin{equation}\label{eq.Lambda_MIM_KL}
    \Upsilon_{\Omega_{\text{\rm small}}}^{\text {\rm MIM}}
    \ge \Upsilon_{\Omega_{\text{\rm small}}}^{\text {\rm KL}},
\end{equation}
as well as
\begin{equation}\label{eq.Lambda_MIM_LS}
        \Upsilon_{\Omega_{\text{\rm small}}}^{\text {\rm MIM}}
    \ge \Upsilon_{\Omega_{\text{\rm small}}}^{\text {\rm LS}},
\end{equation}
where the equality is held if the condition $p=q$ is satisfied.
Besides, $\Upsilon_{\Omega_{\text{\rm small}}}^{\text {\rm KL}}$, $\Upsilon_{\Omega_{\text{\rm small}}}^{\text {\rm LS}}$ and $\Upsilon_{\Omega_{\text{\rm small}}}^{\text {\rm MIM}}$ denote the proportion of small probability events in the value of objective function with respect to the original GAN, LSGAN and MIM-based GAN, respectively.
\end{cor}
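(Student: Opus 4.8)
The plan is to collapse all three proportions into a single algebraic template and then compare them by monotonicity in one scalar coefficient. Proposition \ref{prop.proportion_rare_events_MIM} already exhibits $\Upsilon_{\Omega_{\text{\rm small}}}^{\text{\rm MIM}}$ in the form $(A-c\,B_s)/(1-c\,B_X)$ with $A=\tfrac{1}{2}\big(\Phi\{{\bm x}\in\Omega_{\text{\rm small}}\}+\Phi_{g_\theta}\{{\bm x}\in\Omega_{\text{\rm small}}\}\big)$, $B_s=\varepsilon^2\int_{\Omega_{\text{\rm small}}}P^{2\gamma-1}({\bm x})\rho^2({\bm x})\,{\rm d}{\bm x}\ge 0$, $B_X=\varepsilon^2\int_{\mathcal{X}}P^{2\gamma-1}({\bm x})\rho^2({\bm x})\,{\rm d}{\bm x}\ge B_s$, and coefficient $c_{\text{\rm MIM}}=\tfrac{1}{8}$. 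I would first show that $\Upsilon_{\Omega_{\text{\rm small}}}^{\text{\rm KL}}$ and $\Upsilon_{\Omega_{\text{\rm small}}}^{\text{\rm LS}}$ fit the \emph{same} template, with $A$, $B_s$, $B_X$ unchanged and only the coefficient $c$ differing; the corollary then reduces to the monotonicity of $\Upsilon$ in $c$.

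To obtain the two new templates I would substitute the shared optimal discriminator $D^{*}=P/(P+P_{g_\theta})$ into each (max-min) objective and read off the local contribution density at the optimum. For the original GAN this gives density $P\ln\frac{P+P_{g_\theta}}{P}+P_{g_\theta}\ln\frac{P+P_{g_\theta}}{P_{g_\theta}}$, and the elementary pointwise minimisation for LSGAN yields $\tfrac{1}{2}\,P P_{g_\theta}/(P+P_{g_\theta})$. Writing $P_{g_\theta}=P+\varepsilon P^{\gamma}\rho$, i.e. $P_{g_\theta}/P=1+t$ with $t=\varepsilon P^{\gamma-1}\rho$, and expanding each density to second order in $t$, the linear term integrates (using the constraint $\int_{\mathcal{X}}\varepsilon P^{\gamma}\rho\,{\rm d}{\bm x}=0$ together with $\int_{\Omega_{\text{\rm small}}}\varepsilon P^{\gamma}\rho\,{\rm d}{\bm x}=\Phi_{g_\theta}-\Phi$) to reproduce exactly the $A$-part, while the quadratic term produces the $B$-integrals. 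After normalising numerator and denominator by their zeroth-order totals ($\ln 4$ for KL, $\tfrac14$ for LS, $2\sqrt{{\rm e}}$ for MIM), I expect $c_{\text{\rm KL}}=\tfrac{1}{8\ln 2}$ and $c_{\text{\rm LS}}=\tfrac14$.

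The comparison is then a one-line fact: $\partial_c\frac{A-cB_s}{1-cB_X}=\frac{A B_X-B_s}{(1-cB_X)^2}$, so $\Upsilon$ is nonincreasing in $c$ precisely when $B_s\ge A B_X$. Since $c_{\text{\rm MIM}}=\tfrac18$ is smaller than both $c_{\text{\rm KL}}=\tfrac{1}{8\ln2}\approx 0.18$ and $c_{\text{\rm LS}}=\tfrac14$, the inequalities (\ref{eq.Lambda_MIM_KL})--(\ref{eq.Lambda_MIM_LS}) follow at once once $B_s\ge A B_X$ is established. I expect this last condition to be the main obstacle, and it is exactly where the hypothesis $\int_{\Omega_{\text{\rm small}}}P({\bm x})\,{\rm d}{\bm x}=O(\varepsilon)$ is indispensable: it forces $A=O(\varepsilon)$, so $A B_X=O(\varepsilon^3)$ is dominated by $B_s=O(\varepsilon^2)$ for all sufficiently small $\varepsilon$, giving $B_s\ge A B_X\ge 0$. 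One must also ensure $B_s$ does not itself degenerate to higher order; keeping $\int_{\Omega_{\text{\rm small}}}P^{2\gamma-1}\rho^2\,{\rm d}{\bm x}$ a fixed positive constant makes $B_s$ genuinely of order $\varepsilon^2$.

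Finally, equality is pinned down by the same derivative: $\Upsilon_{\Omega_{\text{\rm small}}}^{\text{\rm MIM}}=\Upsilon_{\Omega_{\text{\rm small}}}^{\text{\rm KL}}$ (resp. $\text{\rm LS}$) forces $B_s=A B_X$, which at leading order in $\varepsilon$ means $B_s=0$, i.e. $\rho\equiv 0$ on $\Omega_{\text{\rm small}}$, i.e. $P_{g_\theta}=P$ there. In the Bernoulli specialisation this is exactly $q=p$, matching the stated equality condition. I would close by noting that the Bernoulli forms of $\Upsilon^{\text{\rm KL}}$ and $\Upsilon^{\text{\rm LS}}$ arise from the very same substitution (the $p$-atom as $\Omega_{\text{\rm small}}$ and $\rho\to\rho_p$) already used for MIM in Proposition \ref{prop.proportion_rare_events_MIM}.
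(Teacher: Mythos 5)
Your proposal is correct and follows essentially the paper's own route: the same second-order Taylor expansions of the three objective functions at their optimal discriminators yield the same template with coefficients $\tfrac18$, $\tfrac{1}{8\ln 2}$, $\tfrac14$, and your key condition $B_s\ge A B_X$ is exactly the paper's inequality $O(\varepsilon^2)-O(\varepsilon^3)\ge 0$ derived from $\int_{\Omega_{\text{\rm small}}}P({\bm x})\,{\rm d}{\bm x}=O(\varepsilon)$. Packaging the two comparisons as monotonicity in the single coefficient $c$ is a mild (and tidy) reorganization of the paper's two direct subtractions, not a different argument; your added caveat that $\int_{\Omega_{\text{\rm small}}}P^{2\gamma-1}\rho^2\,{\rm d}{\bm x}$ must not degenerate is a point the paper leaves implicit.
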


\begin{proof}
Please see the Appendix \ref{app.cor.proportion_rare_events}.
\end{proof}

Additionally, as for WGAN, its optimal discriminator
depends on the positive or negative sign of $[P({\bm x})-P_{g_{\theta}}({\bm x})]$ rather than the values of $P({\bm x})$ and $P_{g_{\theta}}({\bm x})$.
In this regard, the corresponding objective function holds uncertain proportion of small probability events.
Therefore, when comparing MIM-based GAN with WGAN, it is not sure which one has larger proportion of small probability events in the objective function.

\section{Unsupervised anomaly detection with MIM-based GAN}\label{section.anomaly_detection_MIM}
As a technique potentially used in IoT, GAN-based anomaly detection is an intersection between the probability event generation and multi-sensor data processing.

In the conventional GAN-based anomaly detection methods, such as AnoGAN \cite{Unsupervised-anomaly}, ALAD \cite{Adversarially-learned}, GANomaly \cite{GANomaly} and f-AnoGAN \cite{f-AnoGAN}, normal data is used to train the GAN-based models so that the outputs of generator are viewed as the normal.
By means of identification tools such as Euclidean distance, the anomalous events in the testing data set are dug out due to their striking differences from the generative events.
However, it is almost impossible for the generator to output all the normal events, especially those with small probability, since the small probability events with low proportion in the objective function are easily ignored.

Here, we extend the training data of GAN-based anomaly detection from normal data to the whole unsupervised data (including both normal and anomalous data), whose principle is discussed in the following section.

\subsection{Principle analysis}\label{section.Analysis and discussion}\ \par

On one hand, the data generation preference is discussed with respect to MIM-based GAN.
As for the trained generator $G$, it is viewed as a mapping from a latent data space to the real data space, i.e. $G : \mathcal{Z} \to \mathcal{X}$.
Considering that the normal data occurring frequently is more likely learned,
the outputs of generator prefer to reflect the principal components for the distribution of real data, which is described in details as follows.

\begin{prop}[\textit{Preference to generate principal components of distributions}]\label{prop.R_large_probability}
Assume that the optimal discriminator described as Eq. (\ref{eq.D*}) is obtained for a given generator in MIM-based GAN.
Consider the case that the generative distribution $\mathbb{P}_{g_{\theta}}$ is close to the real distribution $\mathbb{P}$. That is, $\varepsilon$ is small enough or even $\varepsilon \to 0$ in the relationship described as Eq (\ref{eq.Pg_Pr}).
As well, the real data is classified into two classes, namely the large probability event set $\Omega_{\text{\rm large}}$ and the small probability event set $\Omega_{\text{\rm small}}$.
Furthermore, if the small probability events satisfy $\int_{\Omega_{\text{\rm small}}}{P({\bm x})} {\rm d}{\bm x} \le \zeta$ (where it is satisfied that $\zeta \ll 1$), we have the proportion of large probability events in the objective function given by
\begin{equation}
\begin{aligned}
\Upsilon_{\Omega_{\text{\rm large}}}^{\text {\rm MIM}}
& \ge
\frac{ 1-\zeta + O(\varepsilon) }
{1+ O(\varepsilon)}
\overset{\varepsilon \to 0}{\longrightarrow}
1-\zeta,
\end{aligned}
\end{equation}
where $O(\cdot)$ denotes the equivalent infinitesimal.
This implies that the principle components of the distribution lying in the large probability event set dominate the value of the objective function, which are more likely generated.
\end{prop}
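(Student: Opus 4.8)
The plan is to obtain $\Upsilon_{\Omega_{\text{\rm large}}}^{\text{\rm MIM}}$ as the complement of the small-event proportion $\Upsilon_{\Omega_{\text{\rm small}}}^{\text{\rm MIM}}$ already computed in Proposition \ref{prop.proportion_rare_events_MIM}. First I would verify the complementarity $\Upsilon_{\Omega_{\text{\rm large}}}^{\text{\rm MIM}} = 1 - \Upsilon_{\Omega_{\text{\rm small}}}^{\text{\rm MIM}}$. Writing the large-event proportion with the same denominator as Eq. (\ref{eq.R_small_MIM}) but with $\Omega_{\text{\rm large}}$ replacing $\Omega_{\text{\rm small}}$ in the numerator, and adding the two numerators, the masses combine via $\int_{\mathcal{X}} P({\bm x}){\rm d}{\bm x} = \int_{\mathcal{X}} P_{g_{\theta}}({\bm x}){\rm d}{\bm x} = 1$, so the sum of the two numerators reproduces exactly the common denominator $1 - \frac{1}{8}\varepsilon^2 \int_{\mathcal{X}} P^{2\gamma-1}({\bm x})\rho^2({\bm x}){\rm d}{\bm x}$. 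Hence
\begin{equation}
\Upsilon_{\Omega_{\text{\rm large}}}^{\text{\rm MIM}} = 1 - \Upsilon_{\Omega_{\text{\rm small}}}^{\text{\rm MIM}}.
\end{equation}

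Next I would substitute the explicit form of $\Upsilon_{\Omega_{\text{\rm small}}}^{\text{\rm MIM}}$ from Eq. (\ref{eq.R_small_MIM}) and simplify its dependence on the disturbance parameter $\varepsilon$. Using Eq. (\ref{eq.Pg_Pr}), the generative mass over $\Omega_{\text{\rm small}}$ obeys $\Phi_{g_{\theta}}\{{\bm x}\in \Omega_{\text{\rm small}}\} = \Phi\{{\bm x}\in \Omega_{\text{\rm small}}\} + \varepsilon\int_{\Omega_{\text{\rm small}}} P^{\gamma}({\bm x})\rho({\bm x}){\rm d}{\bm x} = \Phi\{{\bm x}\in \Omega_{\text{\rm small}}\} + O(\varepsilon)$, while both $\varepsilon^2$-weighted integrals in Eq. (\ref{eq.R_small_MIM}) are $O(\varepsilon)$ for small enough $\varepsilon$. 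This collapses the small-event proportion into
\begin{equation}
\Upsilon_{\Omega_{\text{\rm small}}}^{\text{\rm MIM}} \approx \frac{\Phi\{{\bm x}\in \Omega_{\text{\rm small}}\} + O(\varepsilon)}{1 + O(\varepsilon)},
\end{equation}
and invoking the hypothesis $\int_{\Omega_{\text{\rm small}}} P({\bm x}){\rm d}{\bm x} = \Phi\{{\bm x}\in \Omega_{\text{\rm small}}\} \le \zeta$ gives the upper bound $\Upsilon_{\Omega_{\text{\rm small}}}^{\text{\rm MIM}} \le (\zeta + O(\varepsilon))/(1 + O(\varepsilon))$.

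Finally I would combine the two pieces. Subtracting this bound from unity yields
\begin{equation}
\Upsilon_{\Omega_{\text{\rm large}}}^{\text{\rm MIM}} = 1 - \Upsilon_{\Omega_{\text{\rm small}}}^{\text{\rm MIM}} \ge 1 - \frac{\zeta + O(\varepsilon)}{1 + O(\varepsilon)} = \frac{1 - \zeta + O(\varepsilon)}{1 + O(\varepsilon)},
\end{equation}
and letting $\varepsilon \to 0$ recovers the stated limit $1-\zeta$. The main obstacle I anticipate is not the algebra but the careful bookkeeping of the asymptotic orders: I must confirm that each $\varepsilon^2$ integral is genuinely dominated by $O(\varepsilon)$ so that the denominator stays bounded away from zero and the direction of the inequality is preserved, and that the approximation sign inherited from Proposition \ref{prop.proportion_rare_events_MIM} remains harmless when passing to the limit. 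Pinning down the complementarity rigorously — i.e. checking that no cross contribution between $\Omega_{\text{\rm small}}$ and $\Omega_{\text{\rm large}}$ is lost in the shared normalization — is the other point deserving explicit attention.
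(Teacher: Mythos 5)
Your proof is correct, and it reaches the paper's bound by a slightly different route. The paper's own proof recomputes the large-event proportion from scratch: it writes $\Upsilon_{\Omega_{\text{\rm large}}}^{\text {\rm MIM}}$ as the ratio of $\int_{\Omega_{\text{\rm large}}}[P({\bm x})P_{g_{\theta}}({\bm x})]^{1/2}\,{\rm d}{\bm x}$ to the same integral over $\mathcal{X}$ (after substituting the optimal discriminator), inserts the perturbation relation of Eq.~(\ref{eq.Pg_Pr}), and Taylor-expands only to first order in $\varepsilon$ before applying $\int_{\Omega_{\text{\rm small}}}P\le\zeta$. You instead take the complement of the small-event proportion already established in Proposition~\ref{prop.proportion_rare_events_MIM}. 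The two are substantively the same computation, since Eq.~(\ref{eq.R_small_MIM}) was itself derived from the identical ratio restricted to $\Omega_{\text{\rm small}}$; your exact complementarity check (the two numerators summing to the common denominator because $P$ and $P_{g_\theta}$ both integrate to one) is valid, and the second-order $\varepsilon^2$ terms you inherit from Proposition~\ref{prop.proportion_rare_events_MIM} are harmlessly absorbed into $O(\varepsilon)$. What your route buys is economy — no need to redo the expansion — at the price of having to verify that no mass is lost between the two regions in the shared normalization, which you correctly flag and resolve; the paper's direct route avoids any dependence on Proposition~\ref{prop.proportion_rare_events_MIM} and needs only a first-order expansion, which is marginally cleaner for this particular bound. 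Your bookkeeping of the inequality direction and the limit $\varepsilon\to 0$ matches the paper's conclusion.
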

\begin{proof}
Please see the Appendix \ref{app.prop.R_large_probability}.
\end{proof}

Actually, since the optimal solution of objective function lies in $\mathbb{P}=\mathbb{P}_{g_{\theta}}$ mentioned in Proposition \ref{prop.maximum},
it is reasonably satisfied that $\varepsilon$ is small enough or even $\varepsilon \to 0$ in the relationship described as Eq (\ref{eq.Pg_Pr}), when the generator is drawn to be close to the optimal state.
That is, the generative probability $P_{g_{\theta}}({\bm x})$ is similar to the real $P({\bm x})$.
This implies $P_{g_{\theta}}({\bm x}) \approx P({\bm x}) \gg 0 $ in the region $\{{\bm x} \in \Omega_{\text{large}}\}$,
while in the region $\{{\bm x} \in \Omega_{\text{small}}\}$, $P_{g_{\theta}}({\bm x})$ is quite small or even approximate to zero.

In the above case, we have the proportion of large probability events in the objective function approximating to the probability of large probability events, which is close to one if the proportion of small probability events is small enough.
This also corresponds to that $\Upsilon_{\Omega_{\text{\rm large}}}^{\text {\rm MIM}} \to 1$ if $\zeta \to 0$ in Proposition \ref{prop.R_large_probability}.

Therefore, it is readily seen that large probability events belonging to the principle components of distribution make vital roles in the value of objective function.
This leads the generator to tend to generate large probability events more efficiently.

On the other hand, unsupervised anomaly detection with MIM-based GAN is discussed.
At first, we train MIM-based GAN without supervised labels. In this case, the generative samples output from the generator can be viewed as normal events, which is analyzed as follows.

According to Proposition \ref{prop.R_large_probability}, the generative data $G({\bm z})$ is more likely cast into the large probability event set.
In fact, it is usual that the whole large probability event set holds the main part of normal event set.
In this case, the large probability event set (including most of the normal events) has a greater impact on the generator than the small probability event set (that usually is composed of a few normal events with smaller probability and the anomalous events).
Thus, the generative samples tend to lie in the dominant part of normal event set, which can be treated as a representation for the whole normal event set in the anomaly detection.

Then, the trained model for generating samples is used to help dig out anomalies.
Specifically, when a measurement such as $\mathcal{L}_2$ distance is chosen, some testing samples which are not close to the generative samples, are regarded as anomalies. The details of this method are given in Section \ref{section.procedure_detection_GAN}.

\begin{figure}[!htb]
\centering
\includegraphics[width=6.5 in]{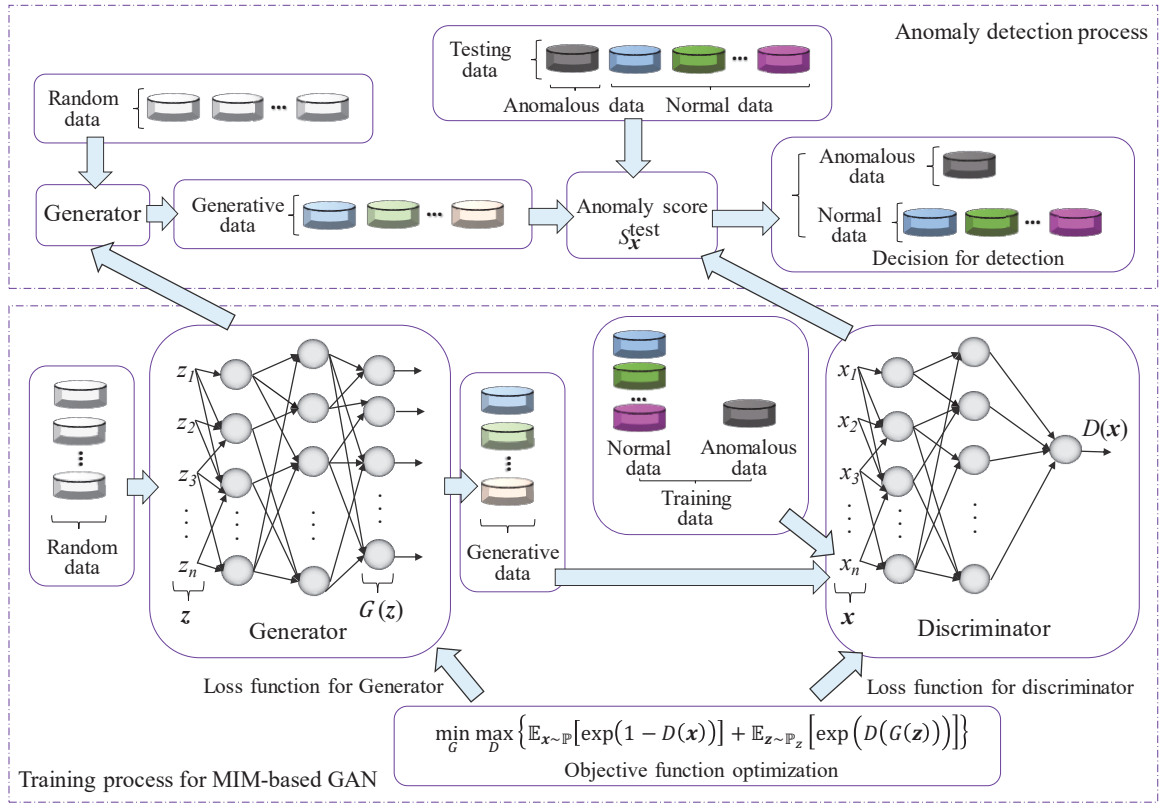}
\vspace{-0.8cm}
\caption{The framework for the unsupervised anomaly detection with MIM-based GAN.}
\label{fig_detect_model}
\end{figure}
\subsection{MIM-based GAN method for unsupervised anomaly detection}\label{section.procedure_detection_GAN}

Here, we shall introduce a method for the unsupervised anomaly detection with MIM-based GAN.
The primary aim is to allocate labels $\{$``$0$'',``$1$''$\}$ to testing samples, where ``$0$'' and ``$1$'' denote the normal event label and the anomalous event label, respectively.
The framework is shown as Fig. \ref{fig_detect_model},
whose key point lies in the part of the training process for MIM-based GAN.
The details are given as follows. 

\begin{itemize}
\item{\textbf{Data preparation and model training}}

First of all, we train MIM-based GAN to generate the data approximating to real data.
The real data without labeling is divided into two parts, namely training data and testing data, corresponding to the two data sets $\Omega_{\text{train}}$ and $\Omega_{\text{test}}$, respectively.
The inputs for MIM-based GAN consist of real data ${\bm x}$ (containing normal data and anomalous data) for the discriminator and random data ${\bm z}$ in the latent space for the generator.
Then, the maxmin game mentioned in Eq. (\ref{eq.opt_LMIM}) is used to train the generator and discriminator (i.e. two neural networks) alternately. 
After enough training iterations, a well trained discriminator $D$ and its corresponding generator $G$ are obtained.

\item{\textbf{Anomaly score computing}}

In this step, the anomaly score is designed as a detection measurement tool.
Generally speaking, it mainly makes use of the well trained discriminator $D$ and generator $G$ to compute the $p$-norm distance and its regularization, where $D$ is used to deal with testing data and generative data, as well as $G$ with random data as its input is used to produce the generative data.
Specifically, the anomaly score is given by
\begin{equation}\label{eq.anomaly_score}
    S^{\text{test}}_{\bm x} = ( 1- \eta )J_{\text{error}}({\bm x}, {\bm z}_{\text{opt}}) + \eta H_{\text{ce}}( D({\bm x}), \beta)\text{,}
\end{equation}
where ${\bm z}_{\text{opt}} = \arg \min\limits_{{\bm z}} {J_{\text{error}}( {\bm x} , {\bm z} )}$, ${\bm x}\in \Omega_{\text{test}}$, $\eta$ is a small parameter ($0<\eta<1$) and $\beta=1$.
As well, the two main components of $S^{\text{test}}_{\bm x}$, i.e. $H_{\text{ce}}(\cdot,\cdot)$ (namely, sigmoid cross entropy) and $J_{\text{error}}(\cdot,\cdot)$, are respectively given by
\begin{equation}
\begin{aligned}
 & H_{\text{ce}}( D({\bm x}), \beta) \\
 & = -\beta \ln \Big[\frac{1}{1+\exp(-D({\bm x}))}\Big]
 - (1-\beta) \ln \Big[1- \frac{1}{1+\exp(-D({\bm x}))}\Big]\text{,}
\end{aligned}
\end{equation}
as well as
\begin{equation}\label{eq.Lerror}
\begin{aligned}
& {J_{\text{error}}({\bm x} , {\bm z} )} \\
& = (1-\lambda)|| {\bm x} - G({\bm z})||_{p} + \lambda H_{\text{ce}}( D(G({\bm z})), \beta)\text{,}
\end{aligned}
\end{equation}
in which $|| \cdot||_{p}$ denotes the $p-$norm $(\text{usually } p=2)$, $\lambda$ is a weight factor $(0<\lambda<1)$ and $H_{\text{ce}}( D(G({\bm z})), \beta)$ is a regularization to ensure $G({\bm z})$ in the real data space.
Then, we normalize these anomaly scores and go to the next step.

\item{\textbf{Decision for detection}}

By resorting to the anomaly score given by Eq. (\ref{eq.anomaly_score}), we make a decision and provide the label for each sample in the testing dataset by using
\begin{equation}\label{eq.decision}
A^{\text{test}}_{\bm x}=\left\{
\begin{aligned}
1, \quad \text{for} \quad S^{\text{test }}_{\bm x}> \Gamma\text{,}\\
0, \quad \text{otherwise}\text{,} \qquad
\end{aligned}
\right.
\end{equation}
where $A^{\text{test}}_{\bm x}$ is the label for testing samples and its non-zero value indicates a detected anomalous event.
In other words, when the scores are higher than a predefined threshold $\Gamma$, the corresponding samples will be considered as anomalies.
\end{itemize}

Moreover, a summary of the above pipeline is provided in Algorithm \ref{Algorithm.anomaly_detection}.

\begin{algorithm}
\caption{Unsupervised anomaly detection with MIM-based GAN}\label{Algorithm.anomaly_detection}
\begin{algorithmic}[1]
\REQUIRE Real data ${\bm x}$ without labeling (containing normal data and anomalous data) for discriminator as well as random data ${\bm z}$ for generator.
\ENSURE The predicted label $A^{\text{test}}_{\bm x}$ (namely, normal class label or anomalous class label) for each testing sample.
\STATE Real data set $\rightarrow$ training data set $\Omega_{\text{train}}$ and testing data set $\Omega_{\text{test}}$.
\STATE Train the discriminator $D$ and generator $G$ of MIM-based GAN jointly.
\FORALL {${\bm x}\in \Omega_{\text{test}}\ $}
    \STATE ${\bm z}_{\text{opt}} \leftarrow  \arg \min\limits_{{\bm z}} {J_{\text{error}}( {\bm x} , {\bm z} )}$, where $J_{\text{error}}(\cdot, \cdot)$ is given by Eq. (\ref{eq.Lerror}).
    \STATE  $S^{\text{test}}_{\bm x} \leftarrow ( 1- \eta )J_{\text{error}}({\bm x}, {\bm z}_{\text{opt}}) + \eta H_{\text{ce}}( D({\bm x}), \beta)$.
\ENDFOR
\STATE $S^{\text{test}}_{\bm x} \leftarrow
\frac{ S^{\text{test}}_{\bm x}-\min\{S^{\text{test}}_{\bm x}\} }
{ \max\{S^{\text{test}}_{\bm x}\}-\min\{S^{\text{test}}_{\bm x}\} } $.
\STATE $A^{\text{test}}_{\bm x} \leftarrow$ Eq. (\ref{eq.decision}).
\end{algorithmic}
\end{algorithm}

Essentially, as is mentioned above, the anomaly score is the dominant measurement to detect anomalies.
With respect to its first term, $J_{\text{error}}({\bm x}, {\bm z}_{\text{opt}})$ is used to reflect whether a sample ${\bm x}$ belongs to the anomaly set or not.
As well, it mainly depends on $|| {\bm x} - G({\bm z})||_{p}$ which is minimized to reveal if the sample ${\bm x}$ is close enough to a generative one (viewed as a normal event).
Meanwhile, as for the second term, $H_{\text{ce}}(D({\bm x}), \beta)$ is used as a regularization which measures how far a testing data can be regarded as a training data (i.e. the real data).
Generally speaking, $J_{\text{error}}({\bm x}, {\bm z}_{\text{opt}})$ plays the most important role in the anomaly score.

Furthermore, the core of $J_{\text{error}}({\bm x}, {\bm z}_{\text{opt}})$ lies on the generator $G$ whose generative data has a large influence on the value of the anomaly score.
According to Section \ref{section.Analysis and discussion}, it is explained how the generative data of MIM-based GAN makes sense in the unsupervised anomaly detection, from the viewpoint of event probability.

\subsection{Discussion for IoT applications}

From the perspective of practical applications, anomaly detection has a great impact on the stability and safety of IoT networks, in which there exist increasing number of distributed devices producing a larger amount of heterogeneous data \cite{6G_Enabled,Wireless_Commun}.
Outliers probably appear in IoT systems, which result from system errors or sensor misreadings.
As well, IoT networks are vulnerable to network-based attacks and security threats \cite{Big_Data_IoT}.
Hence, it is significant to identify any anomalous or unusual events in IoT networks, to avoid erroneous information transmission, as well as to improve the system reliability and efficiency.

Moreover, IoT-based monitoring applications also need anomaly detection to distinguish the faults or anomalies from the normal data of the monitored devices or environments \cite{Make-the-rocket,CHRIST}.
Due to diverse varieties of data generated from IoT systems, conventional anomaly detection methods are  not efficient enough to process these multidimensional data.
Therefore, it is a potential application for our MIM-based GAN method to detect anomalies in IoT systems. The details are discussed as follows.

With respect to a general architecture for IoT systems, there exist three key layers, including a perceptual layer, a network layer and an application layer \cite{Big_Data_IoT,6G_Enabled,Make-the-rocket}.
In particular, the perceptual layer with sensor nodes serves for collecting real-world information.
The network layer is responsible for end-to-end device connection, where there also exist IoT edge devices for edge computing and data processing to release resource centralization.
The application layer with cloud servers is the central controller that has high computing power and adequate storage to deal with received data for specific orders and applications.
In this architecture, our MIM-based GAN method for anomaly detection may be carried out in the network layer and the application layer, including the following key points.
\begin{itemize}
\item \textbf{Cloud-based model training}: In the application layer, the IoT cloud servers with enough computing and storage resources works for MIM-based GAN training.
    In this regard, a large amount of historical or updated data collected from the real world is used as input for the training process.
    Then, the trained model parameters are transmitted to the IoT edge computing devices.
\item \textbf{Edge-based anomaly detection}: In the network layer, anomaly detection process is taken at the IoT edge computing devices, whose input is the real-time collected data.
    The detected results are used for the specific operations depending on the corresponding  applications.
\end{itemize}
The above discussion is also shown in Fig. \ref{fig_IoT_model}.

In practice, it is reasonable to deploy our trained model for detection process at IoT edge devices with constrained computing resources, while using IoT cloud to train the model.
Additionally, it is not difficult to transfer our MIM-based GAN model to achieve anomaly detection for IoT-based applications such as distributed intrusion detection \cite{Distributed_Intrusion}, liquid rocket engine monitoring \cite{Make-the-rocket} and water purification system \cite{CHRIST}.
\begin{figure}[!t]
\centering
\includegraphics[width= 0.7\linewidth ]{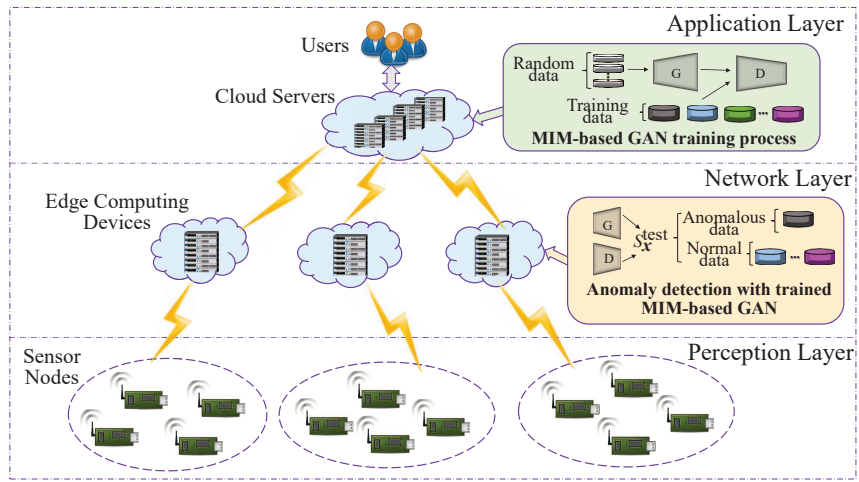}
\caption{The diagram for MIM-based GAN anomaly detection in IoT applications.}
\label{fig_IoT_model}
\end{figure}

\section{Experiments}

Here, we shall take experiments to evaluate the performance of the proposed MIM-based GAN and its corresponding anomaly detection method.

\subsection{Datasets}\label{section.datasets}
Several real datasets from the online repository, i.e. Outlier Detection DataSet (ODDS)
\footnote{http://odds.cs.stonybrook.edu}, are adopted in the experiments, which are introduced as follows.

\begin{itemize}
\item \textbf{Cardiotocography} (or Cardio for short):
As for the Cardiotocography dataset, it has $21$ features in each sample, including Uterine Contraction (UC), Fetal Heart Rate (FHR) and so on.
There exist $1,831$ samples in this classification dataset, in which the pathologic class contains $176$ samples ($9.6\%$ anomalous samples).

\item \textbf{Thyroid}:
The Thyroid dataset includes $6$ real attributes, namely features.
In this dataset, the goal is to dig out the hypothyroid patients regarded as anomalies.
There exist $3,772$ samples in this dataset, including $93$ hyperfunction samples ($2.5\%$ anomalous samples).

\item \textbf{Musk}:
In the Musk dataset, there exist $166$ features in each sample.
Each sample in this dataset can be classified into the several-musks class (regarded as the normal event set) or the non-musk class (regarded as the anomalous event set).
By the way, the number of samples is $3,062$ in this dataset which includes $3.2\%$ anomalies.
\end{itemize}

\subsection{Experiment details}
\subsubsection{Data generation performance}\label{subsection.data_gen_ODDS} \ \par
To reveal the data generation efficiency of MIM-based GAN, we carry out the experiments on the three datasets from ODDS mentioned in Section \ref{section.datasets}.
As for the training data, there are $1,360$ samples (about $74.28 \%$ data), $2,800$ samples (about $74.23\%$ data) and $2,200$ samples (about $71.85\%$ data) drawn randomly from these datasets, respectively.
As well, the rest samples are used as the corresponding testing data.

In terms of the architecture of neural networks, Deep Neural Networks (DNNs) with four layers are chosen, in which sigmoid function and tanh function are chosen as the activation functions in the output layers for the discriminator and generator respectively, while leaky ReLU is chosen for the other layers.
Besides, the Adam algorithm with $0.0001$ learning rate is used to optimize the weights for the networks.

Furthermore, the generator and discriminator in MIM-based GAN are jointly trained to generate data for the above three datasets.
To evaluate the data generation performance, the Reconstruction Error (RE) is selected as a measurement \cite{Pros-cons} which is defined as
\begin{equation}
\begin{aligned}
\mathcal{L}_{\text{RE}} = \frac{1}{N} \sum_{i=1}^{N} \min\limits_{\bm z} ||G({\bm z}) - {\bm x}_i ||_{2},
\end{aligned}
\end{equation}
where $N$ is the size of sample set, $||\cdot ||_{2}$ denotes $\mathcal{L}_2$ norm, $G$ is the generator with input data ${\bm z}$ and ${\bm x}_i$ denotes a testing sample.

At last, for comparison, other GANs based on different objective functions, including the original GAN \cite{Generative-Adversarial-Nets}, LSGAN \cite{Least-squares-generative-adversarial-networks} and WGAN \cite{Wasserstein-GAN}, are also trained and tested in the same way as MIM-based GAN.
For different GANs, the data generation performance measured by RE is listed in the Table \ref{table.data_generation_ODDS}, where the statistic results of $20$ experiments are shown in the three datasets.

\subsubsection{Unsupervised anomaly detection for ODDS} \ \par

In order to evaluate the detection performance of the proposed method (mentioned in Section \ref{section.procedure_detection_GAN}), we do experiments based on the datasets described in Section \ref{section.datasets}.

Based on those imbalanced datasets from ODDS, several common criterions are exploited to show the detection performance, including Precision, Recall, $F_{1}$-score, Receiver Operating Characteristic (ROC) curve and Area Under Curve (AUC).
In particular, Precision, Recall, $F_{1}$-score and Accuracy are given by
\begin{subequations}
\begin{align}
& \text{Precision} = \frac{\text{TP}}{\text{TP}+\text{FP}}, \\
& \text{Recall} = \frac{\text{TP}}{\text{TP}+\text{FN}}, \\
& F_{1}\text{-score} =
\frac{2 \times \text{Recall} \times \text{Precision}}{\text{Recall}+\text{Precision}},\\
& \text{Accuracy} = \frac{\text{TP}+\text{TN}}{\text{TP}+\text{TN}+\text{FP}+\text{FN}},
\end{align}
\end{subequations}
which essentially depend on True Positive (TP), True Negative (TN), False Positive (FP) and False Negative (FN).

As for the anomaly detection method with MIM-based GAN mentioned in \ref{section.procedure_detection_GAN}, the key point lies in the training process for MIM-based GAN.
In this regard, the DNNs are chosen as the generator and discriminator, where the configurations including architectures, parameters, optimizers and so on, are the same as those in Section \ref{subsection.data_gen_ODDS}.
Actually, the training process is the same as that in the data generation experiments mentioned in Section \ref{subsection.data_gen_ODDS}.

After training, we use Eq. (\ref{eq.Lerror}) (with $\lambda=0.1$) to compute the anomaly score described in Eq. (\ref{eq.anomaly_score}) (with $\eta=0.05$), where $\lambda$ and $\eta$ are adjustable parameters used for the corresponding regularization terms in Eq. (\ref{eq.Lerror}) and Eq. (\ref{eq.anomaly_score}) respectively, as well as they do not make great effects on the results when getting small enough.
In the anomaly score, the optimal ${\bm z}_{\text{opt}}$ in latent space is achieved by means of the Adam optimizer with learning rate $0.003$.
As a result, the testing samples labeled by Eq. (\ref{eq.decision}) are classified into the anomalous or the normal.

Now, we compare the proposed method with other unsupervised anomaly detection methods in the three datasets from ODDS.
Since that in our proposed method, the main part of the framework mentioned in Fig. \ref{fig_detect_model} lies in MIM-based GAN, we can replace this part with different GANs (including the original GAN \cite{Generative-Adversarial-Nets}, LSGAN \cite{Least-squares-generative-adversarial-networks} and WGAN \cite{Wasserstein-GAN}) for comparison.
As well, several conventional methods such as K-Means algorithm via PCA \cite{K-means-Clustering-PCA,Dimensionality-Reduction}, Isolation Forest \cite{Isolation-forest,Isolation-based} and Minimum Covariance Determinant \cite{Outlier-detection}, are also compared with the above methods.

\subsection{Results and analysis}
\subsubsection{Data generation results for ODDS}\label{subsection.date_generation_ODDS}\ \par
From Table \ref{table.data_generation_ODDS}, it is not difficult to observe that different kinds of datasets make a difference in the data generation efficiency for these GANs.
However, generally speaking, there is not too much distinction on data generation capability for all these GANs when doing experiments on the same dataset.
\begin{table}[htbp]
\centering
\caption{Results of GAN-based data generation on ODDS}
\label{table.data_generation_ODDS}
\newcommand{\tabincell}[2]{\begin{tabular}{@{}#1@{}}#2\end{tabular}}
\setlength{\tabcolsep}{2.5mm}{
\begin{tabular}{|l|l|l|l|}
\bottomrule
\rowcolor{mygray}
\textbf{GAN Type} & \textbf{Cardio (RE)} &\textbf{Thyroid (RE)} &\textbf{Musk (RE)}\\
\hline
\tabincell{l}{Original GAN
\\ }
&
\tabincell{l}{$3.2464 \pm 0.0124$} & \tabincell{l}{$1.2242 \pm 0.0040$} & \tabincell{l}{$10.9774 \pm  0.0274$}\\
\hline
\tabincell{l}{LSGAN
\\}
&
\tabincell{l}{$2.9985 \pm 0.0114$} & \tabincell{l}{$1.2536 \pm 0.0057$} & \tabincell{l}{$10.0251 \pm 0.0270$}\\
\hline
\tabincell{l}{WGAN
\\}
&
\tabincell{l}{$3.4497 \pm 0.0097$} & \tabincell{l}{$1.4174 \pm 0.0131$} & \tabincell{l}{$9.1237 \pm 0.0433$}\\
\hline
\tabincell{l}{ {MIM-based GAN}
\\}
&
\tabincell{l}{$3.009 \pm 0.0101$} & \tabincell{l}{$1.0009 \pm 0.0023$} &
\tabincell{l}{$9.8383 \pm 0.0319$}\\
\toprule
\end{tabular}}
\end{table}

In particular, MIM-based GAN almost has superior performance of data generation on Cardio and Thyroid datasets, while WGAN has an advantage on Musk dataset.
It is also noteworthy that there exists larger variation on the data generation capability of WGAN, which seems to be influenced by the size and imbalance of different datasets.

In terms of the original GAN, it generally performs worse than MIM-based GAN, which may be caused by its model collapse in some degree.
As well, LSGAN hardly ever has more superiority in data generation than MIM-based GAN.

\begin{table}[htbp]
\centering
\caption{Results of unsupervised anomaly detection on Cardio dataset}
\label{table.Cardio}
\newcommand{\tabincell}[2]{\begin{tabular}{@{}#1@{}}#2\end{tabular}}
\setlength{\tabcolsep}{2.5mm}{
\begin{tabular}{|l|l|l|l|l|}
\bottomrule
\rowcolor{mygray}
\textbf{Methods} & \textbf{Precision} &\textbf{Recall} &\textbf{$F_1$-score} &\textbf{Accuracy}\\
\hline
\tabincell{l}{K-Means$+$PCA\\ }
&
\tabincell{l}{$91.03\% \pm 1.74\%$} & \tabincell{l}{$93.14\% \pm 4.61\%$} & \tabincell{l}{$91.98\% \pm  1.46\%$} & \tabincell{l}{$85.50\% \pm 2.16\%$}\\
\hline
\tabincell{l}{Isolation Forest\\ }
&
\tabincell{l}{$93.45\% \pm 0.39\%$} & \tabincell{l}{$95.34\% \pm 0.56\%$} & \tabincell{l}{$94.39\% \pm  0.24\%$} & \tabincell{l}{$89.82\% \pm 0.41\%$}\\
\hline
\tabincell{l}{Minimum Covariance Determinant \\ }
&
\tabincell{l}{$92.44\% \pm 0.76\%$} & \tabincell{l}{$95.24\% \pm 0.33\%$} & \tabincell{l}{$93.82\% \pm  0.46\%$} & \tabincell{l}{$88.73\% \pm 0.88\%$}\\
\hline
\tabincell{l}{Original GAN Method \\ }
&
\tabincell{l}{$95.00\% \pm 0.80\%$} & \tabincell{l}{$81.11\% \pm 4.12\%$} & \tabincell{l}{$87.47\% \pm  2.75\%$} & \tabincell{l}{$79.23\% \pm 3.94\%$}\\
\hline
\tabincell{l}{LSGAN Method\\}
&
\tabincell{l}{$95.97\% \pm 0.46\%$} & \tabincell{l}{$86.23\% \pm 1.49\%$} & \tabincell{l}{$90.83\% \pm 0.93\%$} & \tabincell{l}{$84.39\% \pm 1.47\%$}\\
\hline
\tabincell{l}{WGAN Method\\}
&
\tabincell{l}{$95.91\% \pm 2.21\%$} & \tabincell{l}{$81.42\% \pm 11.25\%$} & \tabincell{l}{$87.68\% \pm 6.93\%$} & \tabincell{l}{$80.18\% \pm 9.99\%$}\\
\hline
\tabincell{l}{{MIM-based} {GAN} {Method}\\}
&
\tabincell{l}{$96.31\% \pm 0.59\%$} & \tabincell{l}{$86.99\% \pm 2.10\%$} & \tabincell{l}{$91.40\% \pm 1.15\%$} & \tabincell{l}{$85.32\% \pm 1.79\%$}\\
\toprule
\end{tabular}}
\end{table}
\begin{table}[htbp]
\centering
\caption{Results of unsupervised anomaly detection on Thyroid dataset}
\label{table.Thyroid}
\newcommand{\tabincell}[2]{\begin{tabular}{@{}#1@{}}#2\end{tabular}}
\setlength{\tabcolsep}{2.5mm}{
\begin{tabular}{|l|l|l|l|l|}
\bottomrule
\rowcolor{mygray}
\textbf{Methods} & \textbf{Precision} &\textbf{Recall} &\textbf{$F_1$-score} &\textbf{Accuracy}\\
\hline
\tabincell{l}{K-Means$+$PCA\\ }
&
\tabincell{l}{$97.71\% \pm 0.0024\% $} & \tabincell{l}{$95.60\% \pm 0.10\%$} & \tabincell{l}{$96.65\% \pm 0.05\%$} & \tabincell{l}{$93.51\% \pm 0.10\%$}\\
\hline
\tabincell{l}{Isolation Forest\\ }
&
\tabincell{l}{$99.73\% \pm 0.11\%$} & \tabincell{l}{$91.91\% \pm 0.78\%$} & \tabincell{l}{$95.66\% \pm  0.43\%$} & \tabincell{l}{$91.84\% \pm 0.78\%$}\\
\hline
\tabincell{l}{Minimum Covariance Determinant \\ }
&
\tabincell{l}{$99.65\% \pm 0.0002 \%$} & \tabincell{l}{$91.67\% \pm 0.04\%$} & \tabincell{l}{$95.50\% \pm 0.02\%$} & \tabincell{l}{$91.54\% \pm 0.04\%$}\\
\hline
\tabincell{l}{Original GAN Method \\ }
&
\tabincell{l}{$98.87\% \pm 0.30\%$} & \tabincell{l}{$93.04\% \pm 1.11\%$} & \tabincell{l}{$95.86\% \pm  0.63\%$} & \tabincell{l}{$92.15\% \pm 1.15\%$}\\
\hline
\tabincell{l}{LSGAN Method\\}
&
\tabincell{l}{$98.97\% \pm 0.41\%$} & \tabincell{l}{$91.46\% \pm 1.78\%$} & \tabincell{l}{$95.06\% \pm 0.82\%$} & \tabincell{l}{$90.71\% \pm 1.46\%$}\\
\hline
\tabincell{l}{WGAN Method\\}
&
\tabincell{l}{$99.13\% \pm 0.72\%$} & \tabincell{l}{$91.37\% \pm 2.32\%$} & \tabincell{l}{$95.08\% \pm 1.37\%$} & \tabincell{l}{$90.78\% \pm 2.47\%$}\\
\hline
\tabincell{l}{{MIM-based} {GAN} {Method}\\}
&
\tabincell{l}{$99.53\% \pm 0.41\%$} & \tabincell{l}{$92.86\% \pm 0.43\%$} & \tabincell{l}{$96.08\% \pm 0.28\%$} & \tabincell{l}{$92.59\% \pm 0.53\%$}\\
\toprule
\end{tabular}}
\end{table}
\begin{table}[htbp]
\centering
\caption{Results of unsupervised anomaly detection on Musk dataset}
\label{table.Musk}
\newcommand{\tabincell}[2]{\begin{tabular}{@{}#1@{}}#2\end{tabular}}
\setlength{\tabcolsep}{2.5mm}{
\begin{tabular}{|l|l|l|l|l|}
\bottomrule
\rowcolor{mygray}
\textbf{Methods} & \textbf{Precision} &\textbf{Recall} &\textbf{$F_1$-score} &\textbf{Accuracy} \\
\hline
\tabincell{l}{K-Means$+$PCA\\ }
&
\tabincell{l}{$98.02\% \pm 0.77\%$} & \tabincell{l}{$91.91\% \pm 9.61\%$} & \tabincell{l}{$94.59\% \pm  5.12\%$} & \tabincell{l}{$90.24\% \pm 8.58\%$} \\
\hline
\tabincell{l}{Isolation Forest\\ }
&
\tabincell{l}{$100.00\% \pm 0.00\%$} & \tabincell{l}{$92.50\% \pm 2.67\%$} & \tabincell{l}{$96.09\% \pm  1.45\%$} & \tabincell{l}{$92.69\% \pm 2.61\%$} \\
\hline
\tabincell{l}{Minimum Covariance Determinant \\}
&
\tabincell{l}{$100.00\% \pm 0.00\%$} & \tabincell{l}{$93.76\% \pm 0.46\%$} & \tabincell{l}{$96.78\% \pm  0.25\%$} & \tabincell{l}{$93.91\% \pm 0.45\%$} \\
\hline
\tabincell{l}{Original GAN Method \\ }
&
\tabincell{l}{$100.00\% \pm 0.00\%$} & \tabincell{l}{$94.90\% \pm 2.10\%$} & \tabincell{l}{$97.37\% \pm  1.10\%$} & \tabincell{l}{$95.02\% \pm 2.04\%$} \\
\hline
\tabincell{l}{LSGAN Method\\}
&
\tabincell{l}{$100.00\% \pm 0.00\%$} & \tabincell{l}{$98.40\% \pm 0.53\%$} & \tabincell{l}{$99.19\% \pm 0.27\%$} & \tabincell{l}{$98.44\% \pm 0.51\%$} \\
\hline
\tabincell{l}{WGAN Method\\}
&
\tabincell{l}{$99.63\% \pm 0.45\%$} & \tabincell{l}{$98.21\% \pm 0.28\%$} & \tabincell{l}{$98.91\% \pm 0.20\%$} & \tabincell{l}{$97.89\% \pm 0.38\%$} \\
\hline
\tabincell{l}{{MIM-based} {GAN} {Method}\\}
&
\tabincell{l}{$100.00\% \pm 0.00\%$} & \tabincell{l}{$99.28\% \pm 0.35\%$} & \tabincell{l}{$99.64\% \pm 0.18\%$} & \tabincell{l}{$99.30\% \pm 0.34\%$} \\
\toprule
\end{tabular}}
\end{table}

\begin{figure}[!hbt]
\centering
\includegraphics[width=5.7 in]{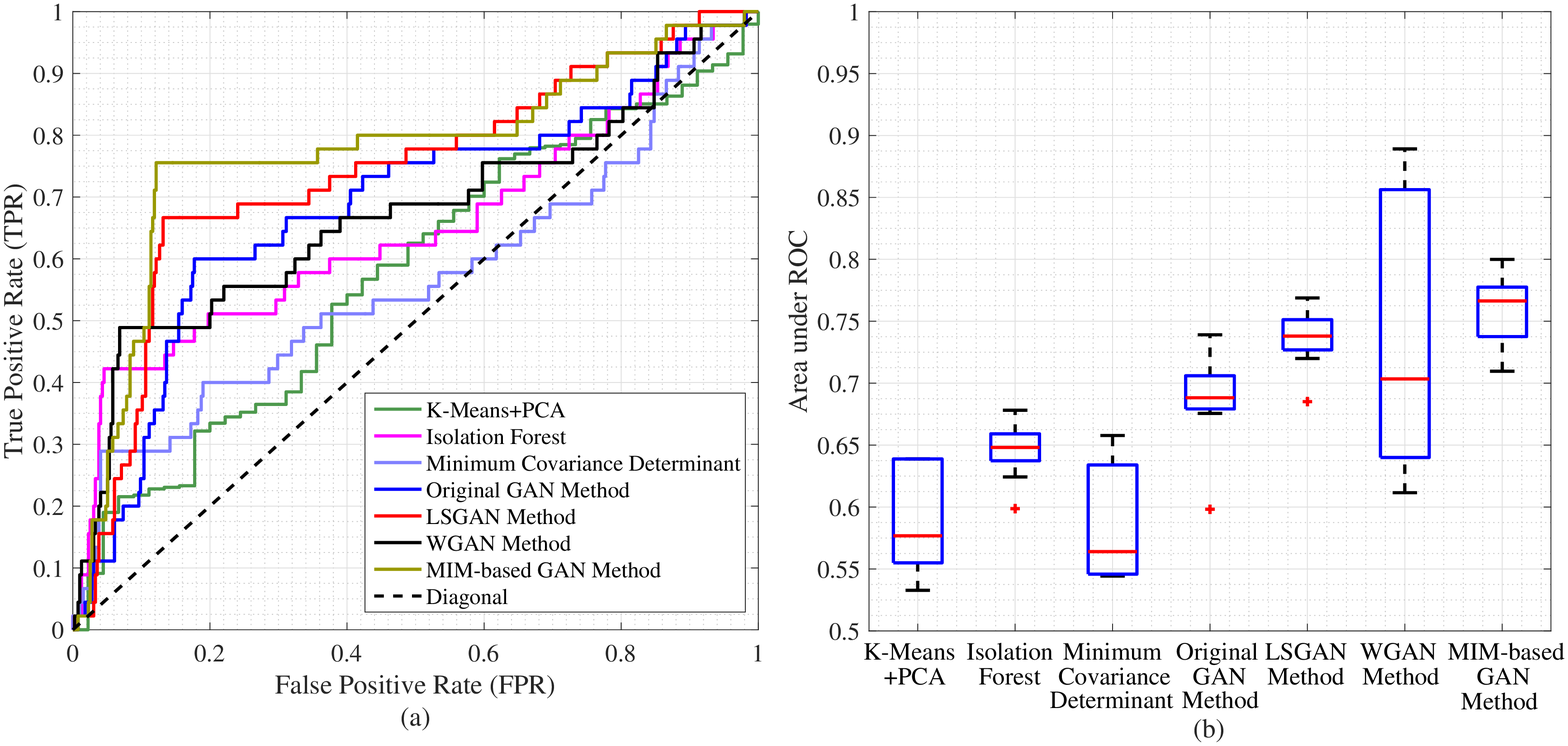}
\caption{
Considering the unsupervised anomaly detection for the Cardio dataset, after randomly shuffling the data, $1,360$ samples (about $74.28 \%$ data) are used as training data and the rest are considered as the testing data.
Especially, with respect to the GAN-based methods, we use the original GAN, WGAN, LSGAN and MIM-based GAN to produce the required generative data, which correspond to KL method, W method, LS method and MIM method, respectively, where the training iterations is $1000$.
To show the performance on anomaly detection for each method, the ROC curve for the experiment with the median AUC as well as the boxplot of AUC for all the experiments are drawn as subfigures ($a$) and ($b$) respectively, where there are $20$ experiments.
}
\label{fig_cardio}
\end{figure}

\subsubsection{Unsupervised anomaly detection results for ODDS}\label{section.ad_result_ODDS}\ \par
TABLE \ref{table.Cardio}, \ref{table.Thyroid} and \ref{table.Musk} evaluate the anomaly detection performance for several methods with the statistics (including mean value and standard deviation) during $20$ experiments.
From Fig. \ref{fig_cardio}, \ref{fig_thyroid} and \ref{fig_musk}, 
ROC curve and AUC are also drawn to provide the comparison for the unsupervised anomaly detection methods.
Since the three datasets of ODDS have different characteristics,
there exists different performance for these discussed methods.
While, generally speaking, the proposed MIM-based GAN method holds its superiority on the anomaly detection in the case of higher-dimensional data and smaller proportion of anomalies.

Specifically, according to the performance on Cardio dataset from TABLE \ref{table.Cardio} and Fig. \ref{fig_cardio}, it is readily seen that the proposed MIM-based GAN method performs better than other methods in terms of Precision, ROC curve and AUC.
While, with respect to Recall, $F_1$-score and Accuracy, it has worse performance than the conventional methods, especially Isolation Forest, though it is better than the other GAN-based methods.
The reason may lie in not small enough proportion of anomalous samples and not enough training samples in Cardio dataset.

With respect to the performance on Thyroid dataset from TABLE \ref{table.Thyroid} and Fig. \ref{fig_thyroid}, the MIM-based GAN method almost plays the suboptimal role compared with other methods.
Actually, due to this dataset without too high dimension, it is reasonable that the GAN-based methods do not have much more advantages than the conventional methods.

\begin{figure}[!hbt]
\centering
\includegraphics[width=5.7 in]{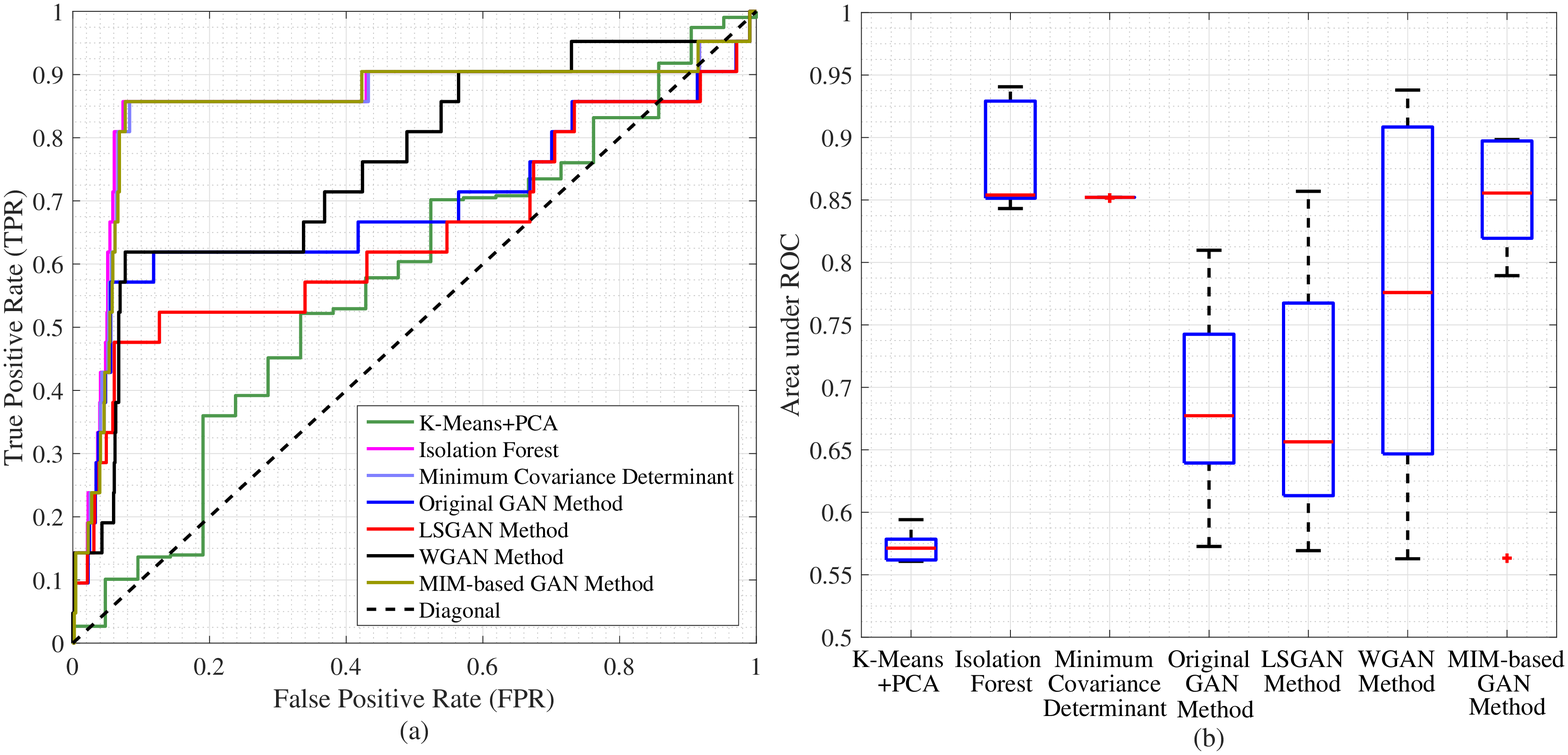}
\caption{
As for the unsupvervised anomaly detection based on Thyroid dataset, $2800$ samples (about $74.23 \%$ data) are used as training data. Besides, the rest samples are regarded as the testing data.
Especially, there are $1500$ training iterations for the GAN-based methods.
To intuitively show the detection result for each method, the ROC curve for the experiment with the median AUC as well as the boxplot of AUC for all the experiments are drawn (as subfigures ($a$) and ($b$)), where $20$ experiments are taken.
}
\label{fig_thyroid}
\end{figure}
\begin{figure}[!hbt]
\centering
\includegraphics[width=5.7 in]{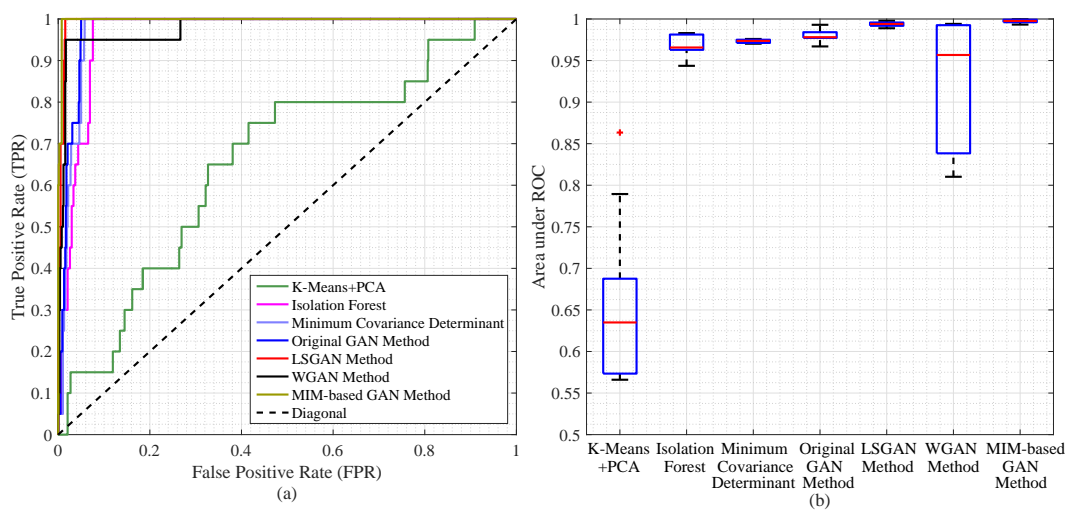}
\caption{
In terms of the unsupervised anomaly detection for the Musk dataset, $2,200$ samples (about $71.85 \%$ data) are used as training data, while the rest samples are used as testing data.
Moreover, the number of training iterations for the GAN-based methods is $50$.
In order to intuitively show the different detection results, with respect to each method, the ROC curve for the experiment with the median AUC is shown as subfigure ($a$), as well as, the boxplot of AUC for all the experiments is drawn as subfigure ($b$), where the performance is evaluated on $20$ experiments.
}
\label{fig_musk}
\end{figure}

When taking experiments on Musk dataset,
the MIM-based GAN method holds the optimal performance compared with the other methods, according to TABLE \ref{table.Musk} and Fig. \ref{fig_musk}.
This also implies that MIM-based GAN method have its advantage on unsupervised anomaly detection for the dataset with high enough dimension and a small enough amount of anomalies.

\subsubsection{Analysis for experiments}\ \par

As for the data generation results in Section \ref{subsection.date_generation_ODDS},
MIM-based GAN is compared with other conventional GANs, showing generally optimal or suboptimal performance.
In some degree, this results from its superior characteristics discussed in Section \ref{subsection.MIM-GAN}, including mode collapse resistance, anti-interference capability of generator gradient and so on.
Moreover, in terms of WGAN, its data generation capability is not robust enough in different datasets,
since its linear form of the objective function is relatively sensitive to data varying attributes.
As for the original GAN and LSGAN, they almost have worse performance than MIM-based GAN, which may  respectively be attributed to model collapse and small probability event ignorance in some degree, according to Section \ref{subsection.MIM-GAN}.

As for the unsupervised anomaly detection results in Section \ref{section.ad_result_ODDS},
it is noteworthy that the MIM-based GAN method almost performs better than the other discussed GAN-based methods (including the original GAN method, LSGAN method and WGAN method) in the discussed datasets.
This is not only related to the data generation capability of MIM-based GAN, but also due to its objective function, in which the proportion of small probability events is amplified by the corresponding information metric, as discussed in Section \ref{section.rare_events_generator}.

Specifically, as for the original GAN and LSGAN, they do not pay more attention on small probability events from the perspective of objective function, while MIM-based GAN does, as well as it also has better theoretical characteristics mentioned in Section \ref{subsection.MIM-GAN}.
As for WGAN, its uncertainty of small probability event proportion caused by the linear objective function may make it more unstable to anomaly detection than MIM-based GAN.

In addition, dataset attributes, including the dimension of features, the number of samples and the proportion of anomalies, have great impacts on the anomaly detection performance.
Compared with the conventional unsupervised anomaly detection methods,
GAN-based methods are better at dealing with higher-dimensional data in the case of adequate training samples with less proportion of anomalies.

\section{Conclusion}
In this paper, a new approach deemed MIM-based GAN is introduced as an alternative to the conventional GANs which are used in anomaly detection.
In this model, the information metric based on the exponential function is exploited in the corresponding objective function, from which several novel characteristics are gained in theory, especially for small probability event influence enhancement.
Furthermore, an unsupervised anomaly detection method with MIM-based GAN is presented, whose principle is also explained from the perspective of event probability influence on data generation.
At last, several online datasets are used to evaluate the data generation of MIM-based GAN and compare its corresponding anomaly detection method with other methods.

\appendices
\section{Proof of Lemma \ref{lem.Optimality}}\label{app.lem.Optimality}
For a given generator $g_{\bm \theta}$, the training criterion of discriminator $D$ in MIM-based GAN is to minimize
\begin{equation}
\begin{aligned}
    & L_{\text{MIM}}(D,G)
    = \int_{\mathcal{X}}[ P({\bm x})\exp(1-D({\bm x})) + P_{g_{\theta}}({\bm x})\exp(D({\bm x}))] {\rm d}{\bm x}\text{,}
\end{aligned}
\end{equation}
where the optimal discriminator $D^{*}_{\text{\rm MIM}}$ is the solution, as well as $\mathcal{X}$ is the domain of ${\bm x}$.

Supposing that there is a function $F(u)=a \exp(1-u)+b\exp(u)$ $(a>0, b>0)$,
we have the solution $u=\frac{1}{2}+ \frac{1}{2}\ln(\frac{a}{b})$ which satisfies $\frac{\partial F(u)}{\partial u}=-a\exp(1-u)+b\exp(u)=0$.
Since that the second order derivative satisfies
$\frac{\partial^2 F(u)}{\partial u^2}=a\exp(1-u)+b\exp(u)>0$, it implies that $F(u)$ is convex with respect to $u$. The solution $u=\frac{1}{2}+\frac{1}{2}\ln(\frac{a}{b})$ is readily obtained to achieve the minimum value of $F(u)$, from which this Lemma has been already proved.

\section{Proof of Proposition \ref{prop.maximum}}\label{app.prop.maximum}
If it is satisfied that $\mathbb{P}=\mathbb{P}_{g_{\theta}}$ namely $D^{*}_{\text{MIM}}({\bm x})=\frac{1}{2}$, we have the value of Eq. (\ref{eq.L_Doptimal}) as
$L_{\text{MIM}}(D=\frac{1}{2},G)=\sqrt{{\rm e}}(1+1)=2\sqrt{{\rm e}}$.
In other words, the maximum value of $L_{\text{MIM}}(D=D^{*}_{\text{MIM}},G)$ is achieved at the point $\mathbb{P}=\mathbb{P}_{g_{\theta}}$.
Moreover, the equivalent formulation for Eq. (\ref{eq.L_Doptimal}) is given by
\begin{equation}
\begin{aligned}
    & \max_{g_{\bm\theta}} L_{\text{MIM}}(D=D^{*}_{\text{MIM}},G)
    \Leftrightarrow \max_{g_{\bm\theta}}
    \sqrt{{\rm e}} \bigg\{
     \ln \mathbb{E}_{{\bm x}\sim \mathbb{P}}
     \bigg[\bigg(\frac{P({\bm x})}{P_{g_{\theta}}({\bm x})}\bigg)^{-\frac{1}{2}}\bigg]
    + \ln \mathbb{E}_{{\bm x}\sim \mathbb{P}_{g_{\theta}}}
    \bigg[\bigg(\frac{P_{g_{\theta}}({\bm x})}{P({\bm x})}\bigg)^{-\frac{1}{2}}\bigg]
    \bigg\}\text{.}
\end{aligned}
\end{equation}
Then, it is readily seen that
\begin{equation}
\begin{aligned}
    & \max_{g_{\bm\theta}} L_{\text{MIM}}(D=D^{*}_{\text{MIM}},G)
    \Leftrightarrow \min_{g_{\bm\theta}}
    \frac{\sqrt{\rm e}}{2}
    \bigg\{ R_{\alpha=\frac{1}{2}}(\mathbb{P}||\mathbb{P}_{g_{\theta}})
        + R_{\alpha=\frac{1}{2}}(\mathbb{P}_{g_{\theta}}||\mathbb{P})  \bigg\}\text{,}
\end{aligned}
\end{equation}
where $R_{\alpha=\frac{1}{2}}(\cdot)$ is the Renyi divergence (with the parameter $\alpha=\frac{1}{2}$) whose definition is given by
\begin{equation}
\begin{aligned}
    & R_{\alpha}(\mathbb{P}||\mathbb{P}_{g_{\theta}})
    =\frac{1}{\alpha-1} \ln \bigg\{\mathbb{E}_{{\bm x}\sim \mathbb{P}} \bigg[\bigg(\frac{P({\bm x})}{P_{g_{\theta}}({\bm x})}\bigg)^{\alpha-1}\bigg] \bigg\}\text{,}
    (\alpha >0, \alpha \ne 1)\text{.}
\end{aligned}
\end{equation}
Since that Renyi divergence achieves the minimum under the condition of two equal distributions, it is easy to see that $2\sqrt{{\rm e}}$ is gotten as the global maximum of $L_{\text{MIM}}(D=D^{*}_{\text{MIM}},G)$ in the case that $\mathbb{P}=\mathbb{P}_{g_{\theta}}$ (which implies the generative model replicates the real data).

\section{Proof of Proposition \ref{prop.mode_collapse}}\label{app.prop.mode_collapse}

When the optimal discriminator is achieved, the data generation of MIM-based GAN depends on the second term of Eq. (\ref{eq.LMIM}), for which the optimization problem can be described as
\begin{equation}\label{eq.MaxG_DMIM}
\begin{aligned}
    & \max_{G} \mathbb{E}_{{\bm z}\sim \mathbb{P}_{z}}[\exp(D^{*}_{\text{\rm MIM}}(G({\bm z})))]\\
    & \Leftrightarrow \max_{\bm \theta} \mathbb{E}_{{\bm x}\sim \mathbb{P}_{g_{\theta}}}[\exp(D^{*}_{\text{\rm MIM}}({\bm x}))]\\
    & \Leftrightarrow \max_{\bm \theta} \mathbb{E}_{{\bm x}\sim \mathbb{P}_{g_{\theta}}}
    \bigg[\exp\bigg( \frac{1}{2}+\frac{1}{2}\ln \frac{P({\bm x})}{P_{g_{\theta}}({\bm x})} \bigg) \bigg]\\
    & \Leftrightarrow \max_{\bm \theta} \sqrt{{\rm e}} \bigg\{
    \mathbb{E}_{{\bm x}\sim \mathbb{P}_{g_{\theta}}} \bigg[ \bigg(\frac{P({\bm x})}{P_{g_{\theta}}({\bm x})}\bigg)^{\frac{1}{2}} \bigg]
    \bigg\}\\
    & \Leftrightarrow \max_{\bm \theta} \int_{\mathcal{X}} P_{g_{\theta}}({\bm x})
    \bigg(\frac{P({\bm x})}{P_{g_{\theta}}({\bm x})}\bigg)^{\frac{1}{2}} {\rm d}{\bm x}\\
    & \Leftrightarrow \max_{\bm \theta}
    \int_{\mathcal{X}} {P_{g_{\theta}}({\bm x})}^{\frac{1}{2}} {P({\bm x})}^{\frac{1}{2}} {\rm d}{\bm x},
\end{aligned}
\end{equation}
where the optimal discriminator $D^{*}_{\text{\rm MIM}}$ is given by Eq. (\ref{eq.D*}) and $\mathcal{X}$ is the domain of ${\bm x}$.

Similarly, as for the original GAN with its optimal discriminator (i.e. $D^{*}_{\text{\rm KL}}({\bm x})=\frac{P({\bm x})}{P({\bm x})+P_{g_{\theta}}({\bm x})}$), the optimization problem for data generation is given by
\begin{equation}\label{eq.MinG_DKL}
\begin{aligned}
    & \min_{G} \mathbb{E}_{{\bm z}\sim \mathbb{P}_{z}}[\ln(1-D^{*}_{\text{\rm KL}}(G({\bm z})))] \\
    & \Leftrightarrow \min_{\bm \theta} \mathbb{E}_{{\bm x}\sim \mathbb{P}_{g_{\theta}}}[\ln(1-D^{*}_{\text{\rm KL}}({\bm x}))] \\
    & \Leftrightarrow \min_{\bm \theta} \mathbb{E}_{{\bm x}\sim \mathbb{P}_{g_{\theta}}}
    \bigg[\ln\bigg( \frac{P_{g_{\theta}}({\bm x})}{P({\bm x})+P_{g_{\theta}}({\bm x})} \bigg)\bigg] \\
    & \Leftrightarrow \max_{\bm \theta} \mathbb{E}_{{\bm x}\sim \mathbb{P}_{g_{\theta}}}
    \bigg[\ln\bigg( 1+\frac{P({\bm x})}{P_{g_{\theta}}({\bm x})} \bigg)\bigg]\\
    & \Leftrightarrow \max_{\bm \theta} \mathbb{E}_{{\bm x}\sim \mathbb{P}_{g_{\theta}}}
    \bigg[\ln\bigg(\frac{P({\bm x})}{P_{g_{\theta}}({\bm x})} \bigg)\bigg]\\
    & \Leftrightarrow \min_{\bm \theta}  \int_{\mathcal{X}} {P_{g_{\theta}}({\bm x})}
    \bigg[\ln\bigg(\frac{P_{g_{\theta}}({\bm x})}{P({\bm x})} \bigg)\bigg] {\rm d}{\bm x}, \\
\end{aligned}
\end{equation}
which implies that this optimization problem is equal to minimizing the KLD between $P_{g_{\theta}}({\bm x})$ and $P({\bm x})$.

In terms of LSGAN mentioned in Table \ref{table.summary_GANs}, its optimal discriminator is given by
$D^{*}_{\text{\rm LS}}({\bm x})=\frac{P({\bm x})}{P({\bm x})+P_{g_{\theta}}({\bm x})}$.
Then, the optimization problem for data generation is described as
\begin{equation}\label{eq.MinG_DLS}
\begin{aligned}
    & \max_{G} \frac{1}{2} \mathbb{E}_{{\bm z}\sim \mathbb{P}_{z}}[(D^{*}_{\text{\rm LS}}(G({\bm z})))^2] \\
    & \Leftrightarrow \max_{\bm \theta} \mathbb{E}_{{\bm x}\sim \mathbb{P}_{g_{\theta}}}
    \bigg[\bigg( \frac{P({\bm x})}{P({\bm x})+P_{g_{\theta}}({\bm x})} \bigg)^2\bigg] \\
    & \Leftrightarrow \max_{\bm \theta}  \int_{\mathcal{X}} {P_{g_{\theta}}({\bm x})}
    \bigg[\bigg(\frac{P({\bm x})}{P({\bm x})+P_{g_{\theta}}({\bm x})} \bigg)^2\bigg] {\rm d}{\bm x}\\
    & \Leftrightarrow \max_{\bm \theta}  \int_{\mathcal{X}} {P({\bm x})}
    \bigg[\frac{P({\bm x})P_{g_{\theta}}({\bm x})}{(P({\bm x})+P_{g_{\theta}}({\bm x}))^2} \bigg] {\rm d}{\bm x},\\
\end{aligned}
\end{equation}
whose optimal solution is reached at $\mathbb{P}=\mathbb{P}_{g_{\theta}}$.
This results from
${P({\bm x})P_{g_{\theta}}({\bm x})} \le \frac{(P({\bm x})+P_{g_{\theta}}({\bm x}))^2}{2}$ where the equality is held if and only if $P({\bm x})=P_{g_{\theta}}({\bm x})$.

According to Eq. (\ref{eq.MinG_DKL}), it is not difficult to see that for the original GAN, when $P_{g_{\theta}}({\bm x}) \to 0$, the minimum is held with any $P({\bm x})$.
This implies that the objective function in the original GAN may not tend to generate more efficient data categories for the real distribution.
However, in the light of Eq. (\ref{eq.MaxG_DMIM}) and Eq. (\ref{eq.MinG_DLS}), the optimization problems for MIM-based GAN and LSGAN avoid $P_{g_{\theta}}({\bm x}) \to 0$, which refrains from the model collapse.

Considering the objective function of WGAN in Table \ref{table.summary_GANs}, it is not difficult to see that the optimal discriminator $D^{*}_{\text{\rm W}}({\bm x})$ and its corresponding optimization problem of data generation depend on $\{P({\bm x})-P_{g_{\theta}}({\bm x})\}$, where
$P({\bm x})$ and $P_{g_{\theta}}({\bm x})$ denote the densities of real distribution and generative distribution, respectively.
In other words, the optimization problem just encourages $\{P({\bm x})-P_{g_{\theta}}({\bm x})\} \to 0$ rather than $P_{g_{\theta}}({\bm x}) \to 0$, which implies WGAN also mitigates the model collapse.

\section{Proof of Proposition \ref{prop.stability}}\label{app.prop.stability}
Consider the fact that the gradient of objective functions with respect to the parameter of generator does not depend on the parameter of discriminator.
In this regard, it is reasonable that the gradient of generator of MIM-based GAN is represented by $\nabla_{\bm\theta}\mathbb{E}_{{\bm z}\sim \mathbb{P}_z} [\exp(D(g_{\bm\theta}({\bm z}))) ]$.

Based on the assumptions mentioned in Proposition \ref{prop.stability}, it is not difficult to see that
\begin{equation}\label{eq.gradient_D*}
\begin{aligned}
    & \nabla_{\bm\theta}\mathbb{E}_{{\bm z}\sim \mathbb{P}_z} [\exp({\tilde D^*}(g_{\bm\theta}({\bm z}))) ]
    = \mathbb{E}_{{\bm z}\sim \mathbb{P}_z} [\exp({\tilde D^*}(g_{\bm\theta}({\bm z})))
    \nabla_{{\bm x}}{\tilde D^*}({\bm x})\nabla_{\bm\theta}g_{\bm\theta}({\bm z}) ]
    =0,
\end{aligned}
\end{equation}
as well as
\begin{equation}\label{eq.gradient_D}
\begin{aligned}
    & \nabla_{\bm\theta}\mathbb{E}_{{\bm z}\sim \mathbb{P}_z} [\exp(D(g_{\bm\theta}({\bm z}))) ]\\
    & = \mathbb{E}_{{\bm z}\sim \mathbb{P}_z} [\exp(D(g_{\bm\theta}({\bm z})))
    \nabla_{{\bm x}}D({\bm x})\nabla_{\bm\theta}g_{\bm\theta}({\bm z}) ]\\
    & = \mathbb{E}_{{\bm z}\sim \mathbb{P}_z} [\exp({\tilde D^*}(g_{\bm\theta}({\bm z}))+\delta) (\nabla_{{\bm x}}{\tilde D^*}({\bm x}) + \upsilon)\nabla_{\bm\theta}g_{\bm\theta}({\bm z}) ]\\
    & = \exp(\delta) \upsilon \mathbb{E}_{{\bm z}\sim \mathbb{P}_z}
    [\nabla_{\bm\theta}g_{\bm\theta}({\bm z}) ] \text{, }
\end{aligned}
\end{equation}
where the notations are the same as those in Proposition \ref{prop.stability}.
Comparing Eq. (\ref{eq.gradient_D*}) and Eq. (\ref{eq.gradient_D}), this proposition is verified.

\section{Proof of Corollary \ref{cor.stability}}\label{app.cor.stability}
As for MIM-based GAN and the original GAN, it is readily seen that
$\nabla_{\bm\theta} L_{\text {\rm MIM}}(D,g_{\bm\theta})$ and $\nabla_{\bm\theta} L_{\text {\rm KL}}(D,g_{\bm\theta})$ depend on
$\nabla_{\bm\theta}\mathbb{E}_{{\bm z}\sim \mathbb{P}_z} [\exp(D(g_{\bm\theta}({\bm z}))) ]$ and
$\nabla_{\bm\theta}\mathbb{E}_{{\bm z}\sim \mathbb{P}_z} [\ln (1-D(g_{\bm\theta}({\bm z}))) ]$, respectively.

When there exists the condition satisfying $D-\tilde D^*=\delta$ and $\nabla_{{\bm x}} D({\bm x})-\nabla_{{\bm x}} \tilde D^*({\bm x})=\upsilon$ ($\tilde D^*(g_{\bm\theta}({\bm z}))=0$ and $\nabla_{{\bm x}} \tilde D^*({\bm x})=0$), we have
\begin{equation}\label{eq.KL_gradient_D*}
\begin{aligned}
    & \nabla_{\bm\theta}\mathbb{E}_{{\bm z}\sim \mathbb{P}_z} [\ln (1-{\tilde D^*}(g_{\bm\theta}({\bm z}))) ]
    = \mathbb{E}_{{\bm z}\sim \mathbb{P}_z}
    [- \frac{ \nabla_{{\bm x}}{\tilde D^*}({\bm x}) \nabla_{\bm\theta}g_{\bm\theta}({\bm z})}
    {1-\tilde D^*(g_{\bm\theta}({\bm z}))} ]
     = 0 \text{,}
\end{aligned}
\end{equation}
as well as
\begin{equation}\label{eq.KL1_gradient_D}
\begin{aligned}
    & \nabla_{\bm\theta}\mathbb{E}_{{\bm z}\sim \mathbb{P}_z} [\ln (1-D(g_{\bm\theta}({\bm z}))) ]\\
    & = \mathbb{E}_{{\bm z}\sim \mathbb{P}_z}
    [- \frac{ \nabla_{{\bm x}}D({\bm x})\nabla_{\bm\theta}g_{\bm\theta}({\bm z})}
    {1-D(g_{\bm\theta}({\bm z}))} ]\\
    & = \mathbb{E}_{{\bm z}\sim \mathbb{P}_z}
    [- \frac{ (\nabla_{{\bm x}}{\tilde D^*}({\bm x})+ \upsilon) \nabla_{\bm\theta}g_{\bm\theta}({\bm z})}
    {1-\tilde D^*(g_{\bm\theta}({\bm z})) -\delta} ]\\
    & = -\frac{\upsilon}{1 -\delta}
    \mathbb{E}_{{\bm z}\sim \mathbb{P}_z}[\nabla_{\bm\theta}g_{\bm\theta}({\bm z})]\text{.}
\end{aligned}
\end{equation}

Then, it is readily seen that
\begin{equation}\label{eq.KL1_perfect_stability}
\begin{aligned}
    & \nabla_{\bm\theta}\mathbb{E}_{{\bm z}\sim \mathbb{P}_z} [\ln (1-D(g_{\bm\theta}({\bm z}))) ]
    - \nabla_{\bm\theta}\mathbb{E}_{{\bm z}\sim \mathbb{P}_z} [\ln (1-{\tilde D^*}(g_{\bm\theta}({\bm z}))) ]
    = -\frac{\upsilon}{1 -\delta}
    \mathbb{E}_{{\bm z}\sim \mathbb{P}_z}[\nabla_{\bm\theta}g_{\bm\theta}({\bm z})]\text{.}
\end{aligned}
\end{equation}

Assume there is a function $h(u)=\frac{1}{1-u}-\exp(u)$ ($u\in (0,1)$).
Due to the fact $\exp(-u) > 1-u$ where $u\in (0,1)$, it is readily seen that $h(u)> 0$ is held for $u\in (0,1)$.

Consequently, by comparing Eq. (\ref{eq.MIM_perfect_stability}) with Eq. (\ref{eq.KL1_perfect_stability}), it is not difficult to see this corollary is true.

\section{Proof of Proposition \ref{prop.proportion_rare_events_MIM}}\label{app.prop.proportion_rare_events_MIM}
By substituting the relationship between $\mathbb{P}$ and $\mathbb{P}_{g_{\theta}}$ described as Eq. (\ref{eq.Pg_Pr}) into Eq. (\ref{eq.L_Doptimal}), it is not difficult to see that
\begin{equation}
\begin{aligned}
    & L_{\text{\rm MIM}}(D=D^{*}_{\text{\rm MIM}},G)\\
    & = \sqrt{{\rm e}} \bigg\{
     \mathbb{E}_{{\bm x}\sim \mathbb{P}}\bigg[\bigg(\frac{P({\bm x})}{P_{g_{\theta}}({\bm x})}\bigg)^{-\frac{1}{2}}\bigg]
    +\mathbb{E}_{{\bm x}\sim \mathbb{P}_{g_{\theta}}}\bigg[\bigg(\frac{P_{g_{\theta}}({\bm x})}{P({\bm x})}\bigg)^{-\frac{1}{2}}\bigg]
    \bigg\}\\
    & = \sqrt{{\rm e}}
    \int_{\mathcal{X}} \big\{ P({\bm x}) (1+ \varepsilon P^{\gamma-1}({\bm x})\rho({\bm x}))^{\frac{1}{2}} 
    + ( P({\bm x}) + \varepsilon P^{\gamma}({\bm x})\rho({\bm x}) )
    (1+\varepsilon P^{\gamma-1}({\bm x})\rho({\bm x}) )^{-\frac{1}{2}}
    \big\} {\rm d}{\bm x} \\
    & = 2\sqrt{{\rm e}} \int_{\mathcal{X}} P({\bm x})
    (1+ \varepsilon P^{\gamma-1}({\bm x})\rho({\bm x}))^{\frac{1}{2}} {\rm d}{\bm x} \\
    & \overset{(a)}{=} 2\sqrt{{\rm e}} \int_{\mathcal{X}} P({\bm x})
    \big\{ 1+ \frac{1}{2}\varepsilon P^{\gamma-1}({\bm x})\rho({\bm x}) 
    - \frac{1}{8} \varepsilon^2 P^{2\gamma-2}({\bm x})\rho^2({\bm x}) + o(\varepsilon^2) \big\} {\rm d}{\bm x},\\
    & = 2\sqrt{{\rm e}} \bigg\{
     1-\frac{1}{8} \varepsilon^2 \int_{\mathcal{X}} P^{2\gamma-1}({\bm x})\rho^2({\bm x}) {\rm d}{\bm x}
     + o(\varepsilon^2)  \bigg\},
\end{aligned}
\end{equation}
where the equality $(a)$ is derived from Taylor's Series Expansion Theorem.

Based on the above discussion, the part of $L_{\text{MIM}}(D=D^{*}_{\text{MIM}},G)$ influenced by small probability events is given by
\begin{equation}
\begin{aligned}
    & \Lambda_{L_{\text{MIM}}(D=D^{*}_{\text{MIM}},G)} \\
    & = \sqrt{{\rm e}} \bigg\{
     \int_{\Omega_{\text{\rm small}}} P({\bm x}) \bigg(\frac{P({\bm x})}{P_{g_{\theta}}({\bm x})}\bigg)^{-\frac{1}{2}} {\rm d}{\bm x} 
    + \int_{\Omega_{\text{\rm small}}} P_{g_{\theta}}({\bm x}) \bigg(\frac{P_{g_{\theta}}({\bm x})}{P({\bm x})}\bigg)^{-\frac{1}{2}} {\rm d}{\bm x}
    \bigg\}\\
    & = 2\sqrt{{\rm e}} \int_{\Omega_{\text{\rm small}}} P({\bm x})
    (1+ \varepsilon P^{\gamma-1}({\bm x})\rho({\bm x}))^{\frac{1}{2}} {\rm d}{\bm x} \\
    & = 2\sqrt{{\rm e}} \int_{\Omega_{\text{\rm small}}} P({\bm x})
    \big\{ 1+ \frac{1}{2}\varepsilon P^{\gamma-1}({\bm x})\rho({\bm x})
    - \frac{1}{8} \varepsilon^2 P^{2\gamma-2}({\bm x})\rho^2({\bm x}) + o(\varepsilon^2) \big\} {\rm d}{\bm x},
\end{aligned}
\end{equation}
for which, the proportion of small probability events in the value of
$L_{\text{MIM}}(D=D^{*}_{\text{MIM}},G)$ is given by
\begin{equation}\label{eq.rare_MIM}
\begin{aligned}
    \Upsilon_{\Omega_{\text{\rm small}}}^{\text {\rm MIM}} 
    & = \frac{\Lambda_{L_{\text{MIM}}(D=D^{*}_{\text{MIM}},G)}}{L_{\text{MIM}}(D=D^{*}_{\text{MIM}},G)}\\
    & = \frac{
    \int_{\Omega_{\text{\rm small}}}
    \{ \frac{2P({\bm x}) +\varepsilon P^{\gamma}({\bm x})\rho({\bm x})}{2}
    - \frac{\varepsilon^2 P^{2\gamma-1}({\bm x})\rho^2({\bm x})}{8}   \} {\rm d}{\bm x}
    + o(\varepsilon^2)
    }
    {1-\frac{1}{8}\varepsilon^2 \int_{\mathcal{X}} P^{2\gamma-1}({\bm x})\rho^2({\bm x})
    {\rm d}{\bm x} + o(\varepsilon^2)
    }\\
    & \approx \frac{\frac{ \Phi\{{\bm x}\in \Omega_{\text{\rm small}}\}
    + \Phi_{g_{\theta}}\{{\bm x}\in \Omega_{\text{\rm small}}\}}{2}
    -\frac{\varepsilon^2}{8} \int_{\Omega_{\text{\rm small}}} P^{2\gamma-1}({\bm x})\rho^2({\bm x})
    {\rm d}{\bm x} }
    { 1-\frac{\varepsilon^2}{8} \int_{\mathcal{X}} P^{2\gamma-1}({\bm x})\rho^2({\bm x})
    {\rm d}{\bm x} },
\end{aligned}
\end{equation}
where the notations are the same as those in Eq. (\ref{eq.R_small_MIM}).

In addition, with regard to the condition $\mathbb{P}_{g_{\theta}}= \{ p+\varepsilon p^{\gamma} \rho_p, 1-p-\varepsilon p^{\gamma}\rho_p \} = \{q,1-q\}$ ($q<\frac{1}{2}$),
we pay attention to the small probability events which can be reflected into the probability element $p$ rather than $(1-p)$ $(0<p\ll\frac{1}{2})$.
In this case, similar to Eq. (\ref{eq.rare_MIM}), the proportion of small probability events in the value of objective function is given by
\begin{equation}\label{eq.binary_rare_MIM}
\begin{aligned}
    \Upsilon_{\Omega_{\text{\rm small}}}^{\text {\rm MIM}} 
    & = \frac{
    2\sqrt{{\rm e}} \big\{ p (1+\varepsilon p^{\gamma-1}\rho_p)^{\frac{1}{2}} \big\}
    }{
    2\sqrt{{\rm e}} \big\{
    p (1+\varepsilon p^{\gamma-1}\rho_p)^{\frac{1}{2}}+ (1-p)\bigg(1-\frac{\varepsilon p^{\gamma}\rho_p}{1-p}\bigg)^{\frac{1}{2}}
    \big\}
    }\\
    & = \frac{
    p+\frac{1}{2} \varepsilon p^{\gamma}\rho_p-\frac{1}{8}\varepsilon^2p^{2\gamma-1}\rho_p^2
    + o(\varepsilon^2)
    }
    { T_{\text{MIM}}
    }\\
    & =  \frac{\frac{p+q}{2}-\frac{1}{8}\varepsilon^2p^{2\gamma-1}\rho_p^2 + o(\varepsilon^2) }
    { 1-\frac{1}{8}\varepsilon^2 \frac{p^{2\gamma-1}\rho_p^2}{1-p} + o(\varepsilon^2)} 
    \approx \frac{\frac{p+q}{2}-\frac{1}{8}\varepsilon^2p^{2\gamma-1}\rho_p^2}
    { 1-\frac{1}{8} \varepsilon^2 \frac{ p^{2\gamma-1}\rho_p^2}{1-p}}\text{,}
\end{aligned}
\end{equation}
where $T_{\text{MIM}}=\{ p+\frac{1}{2} \varepsilon p^{\gamma}\rho_p - \frac{1}{8}\varepsilon^2p^{2\gamma-1}\rho_p^2+ o(\varepsilon^2) \}$
$+ \{ (1-p)-\frac{1}{2} \varepsilon p^{\gamma}\rho_p-\frac{1}{8}\varepsilon^2 \frac{p^{2\gamma}\rho_p^2}{1-p} + o(\varepsilon^2) \}$.

Therefore, this proposition is verified completely.

\section{Proof of Corollary \ref{cor.proportion_rare_events}}\label{app.cor.proportion_rare_events}
Similar to Proposition \ref{prop.proportion_rare_events_MIM}, the small probability event analysis for the original GAN is also available. We have
\begin{equation}
\begin{aligned}
    & L_{\text{KL}}(D=D^{*}_{\text{KL}},G)\\
    & = \mathbb{E}_{{\bm x}\sim \mathbb{P}}
    \bigg[\ln \frac{{P}({\bm x})}{{P}({\bm x})+ P_{g_{\theta}}({\bm x})}\bigg]
    +\mathbb{E}_{{\bm x}\sim \mathbb{P}_{g_{\theta}}}
    \bigg[\ln \frac{ P_{g_{\theta}}({\bm x})}{{P}({\bm x})+ P_{g_{\theta}}({\bm x})}\bigg]\\
    & = \int_{\mathcal{X}} \big\{
    - P({\bm x})\ln (2+\varepsilon P^{\gamma-1}({\bm x}) \rho({\bm x}))
    + (P({\bm x})+ \varepsilon P^{\gamma}(\bm x)\rho(\bm x))
    \ln ( 1-\frac{1}{2+\varepsilon P^{\gamma-1}({\bm x})\rho({\bm x})} )
    \big\} {\rm d}{\bm x} \\
    & = \int_{\mathcal{X}} \big\{
    P({\bm x}) \big[ -\ln2 -\frac{\varepsilon P^{\gamma-1}({\bm x})\rho({\bm x})}{2}
    +\frac{\varepsilon^2 P^{2\gamma-2}({\bm x})\rho^2({\bm x})}{8}
     + o(\varepsilon) \big] \\
    & \quad + (P({\bm x}) + \varepsilon P^{\gamma}\rho({\bm x}))
    \big[ -\ln2 + \frac{\varepsilon P^{\gamma-1}({\bm x})\rho({\bm x})}{2} 
     - \frac{3 \varepsilon^2 P^{2\gamma-2}({\bm x})\rho^2({\bm x})}{8} + o(\varepsilon^2) \big]
    \big\} {\rm d}{\bm x}\\
    & = -2\ln2
    + \frac{\varepsilon^2}{4} \int_{\mathcal{X}} P^{2\gamma-1}({\bm x})\rho^2({\bm x}) {\rm d}{\bm x}
    + o(\varepsilon^2),
\end{aligned}
\end{equation}
in which the part influenced by small probability events is described as
\begin{equation}
\begin{aligned}
    & \Lambda_{L_{\text{KL}}(D=D^{*}_{\text{KL}},G)} \\
    & = \int_{\Omega_{\text{small}}} P({\bm x})
    \bigg[\ln \frac{{P}({\bm x})}{{P}({\bm x})+ P_{g_{\theta}}({\bm x})}\bigg] {\rm d}{\bm x}
    + \int_{\Omega_{\text{small}}} P_{g_{\theta}}({\bm x})
    \bigg[\ln \frac{ P_{g_{\theta}}({\bm x})}{{P}({\bm x})+ P_{g_{\theta}}({\bm x})}\bigg]
    {\rm d}{\bm x} \\
    & = \int_{\Omega_{\text{small}}} \big\{
    -P({\bm x})\ln (2+\varepsilon P^{\gamma-1}({\bm x}) \rho({\bm x})) 
    + (P({\bm x})+ \varepsilon P^{\gamma}(\bm x)\rho(\bm x))
    \ln ( 1-\frac{1}{2+\varepsilon P^{\gamma-1}({\bm x})\rho({\bm x})} )
    \big\} {\rm d}{\bm x} \\
    & = \int_{\Omega_{\text{small}}} \big\{ -2\ln2 \cdot P({\bm x})
    - \ln2 \cdot \varepsilon P^{\gamma}({\bm x})\rho({\bm x}) 
    + \frac{\varepsilon^2 P^{2\gamma-1}({\bm x})\rho^2({\bm x})}{4} \big\} {\rm d}{\bm x}
    + o(\varepsilon^2).
\end{aligned}
\end{equation}
Thus, as for the original GAN, the proportion influenced by small probability events in $L_{\text{KL}}(D=D^{*}_{\text{KL}},G)$ is given by
\begin{equation}\label{eq.rare_KL}
\begin{aligned}
    & \Upsilon_{\Omega_{\text{\rm small}}}^{\text {\rm KL}} \\
    &= \frac{\Lambda_{L_{\text{KL}}(D=D^{*}_{\text{KL}},G)}}{L_{\text{KL}}(D=D^{*}_{\text{KL}},G)}\\
    & 
    =  \frac{
     \int_{\Omega_{\text{small}}} \big\{
     \frac{2P({\bm x}) + \varepsilon P^{\gamma}({\bm x})\rho({\bm x})}{2}
     - \frac{\varepsilon^2 P^{2\gamma-1}({\bm x})\rho^2({\bm x})}{8\ln2} \big\} {\rm d}{\bm x}
     + o(\varepsilon^2)
    }
    {
    1
    - \frac{\varepsilon^2}{8\ln2} \int_{\mathcal{X}} P^{2\gamma-1}({\bm x})\rho^2({\bm x}) {\rm d}{\bm x}
    + o(\varepsilon^2)
    } \\
    & \approx \frac{\frac{ \Phi\{{\bm x}\in \Omega_{\text{\rm small}}\}
    + \Phi_{g_{\theta}}\{{\bm x}\in \Omega_{\text{\rm small}}\}}{2}
    -\frac{\varepsilon^2}{8\ln2} \int_{\Omega_{\text{\rm small}}} P^{2\gamma-1}({\bm x})\rho^2({\bm x})
    {\rm d}{\bm x} }
    { 1-\frac{\varepsilon^2}{8\ln2} \int_{\mathcal{X}} P^{2\gamma-1}({\bm x})\rho^2({\bm x})
    {\rm d}{\bm x} },
\end{aligned}
\end{equation}
where the notations are the same as those in Eq. (\ref{eq.R_small_MIM}).

When it is satisfied that $\int_{\Omega_{\text{\rm small}}} P({\bm x}) {\rm d}{\bm x} = O(\varepsilon)$,
it is not difficult to see that
\begin{equation}\label{eq.rare_MIM_minus_rare_KL}
\begin{aligned}
    & \varepsilon^2 \cdot \int_{\Omega_{\text{\rm small}}} P^{2\gamma-1}({\bm x})\rho^2({\bm x}){\rm d}{\bm x}
    -
    \bigg\{
    \varepsilon^2 \cdot \int_{\mathcal{X}} P^{2\gamma-1}({\bm x})\rho^2({\bm x}){\rm d}{\bm x}
    \bigg\} 
     \cdot
    \int_{\Omega_{\text{small}}} \bigg\{ P({\bm x}) 
    + \frac{\varepsilon P^{\gamma}({\bm x})\rho({\bm x})}{2} \bigg\} {\rm d}{\bm x} \\
    & =  O(\varepsilon^2) - O(\varepsilon^3) \\
    & \ge 0,
\end{aligned}
\end{equation}
from which it is readily seen that the term of right-hand side in Eq. (\ref{eq.rare_MIM}) is not less than that in Eq. (\ref{eq.rare_KL}), which verifies Eq. (\ref{eq.Lambda_MIM_KL}).

When the assumption in Proposition \ref{prop.proportion_rare_events_MIM} is satisfied for LSGAN, it is not difficult to obtain that
\begin{equation}
\begin{aligned}
    & L_{\text{LS}}(D=D^{*}_{\text{LS}},G)\\
    & = \frac{1}{2}\mathbb{E}_{{\bm x}\sim \mathbb{P}}
    \bigg[\bigg(\frac{{P}({\bm x})}{{P}({\bm x})+ P_{g_{\theta}}({\bm x})} -1 \bigg)^2\bigg]
     +\frac{1}{2} \mathbb{E}_{{\bm x}\sim \mathbb{P}_{g_{\theta}}}
    \bigg[\bigg( \frac{{P}({\bm x})}{{P}({\bm x})+ P_{g_{\theta}}({\bm x})}\bigg)^2\bigg]\\
    & = \frac{1}{2} \int_{\mathcal{X}} \bigg\{
    P({\bm x}) \bigg( 1-\frac{1}{2+\varepsilon P^{\gamma-1}({\bm x})\rho({\bm x})} \bigg)^{2} 
    + (P({\bm x})+\varepsilon P^{\gamma}({\bm x})\rho({\bm x}))
    \bigg( \frac{1}{2+\varepsilon P^{\gamma-1}({\bm x})\rho({\bm x})} \bigg)^{2} \bigg\}
    {\rm d}{\bm x}\\
    & = \frac{1}{4}
    - \frac{\varepsilon^2}{16} \int_{\mathcal{X}} P^{2\gamma-1}({\bm x})\rho^2({\bm x}) {\rm d}{\bm x}
    + o(\varepsilon^2),
\end{aligned}
\end{equation}
as well as the part influenced by small probability events in $L_{\text{LS}}(D=D^{*}_{\text{LS}},G)$ is given by
\begin{equation}
\begin{aligned}
    & \Lambda_{L_{\text{LS}}(D=D^{*}_{\text{LS}},G)} \\
    & = \frac{1}{2} \int_{\Omega_{\text{small}}} P({\bm x})
    \bigg( \frac{P({\bm x})}{P({\bm x})+ P_{g_{\theta}}({\bm x})}-1 \bigg)^2 {\rm d}{\bm x}
     +\frac{1}{2} \int_{\Omega_{\text{small}}} P_{g_{\theta}}({\bm x})
    \bigg( \frac{ P({\bm x})}{{P}({\bm x})+ P_{g_{\theta}}({\bm x})} \bigg)^2 {\rm d}{\bm x}\\
    & = \frac{1}{2} \int_{\Omega_{\text{small}}} \bigg\{
    P({\bm x}) \bigg( 1-\frac{1}{2+\varepsilon P^{\gamma-1}({\bm x})\rho({\bm x})} \bigg)^{2} 
     + (P({\bm x})+\varepsilon P^{\gamma}({\bm x})\rho({\bm x}))
    \bigg( \frac{1}{2+\varepsilon P^{\gamma-1}({\bm x})\rho({\bm x})} \bigg)^{2} \bigg\}
    {\rm d}{\bm x}\\
    & = \frac{1}{4} \int_{\Omega_{\text{small}}} \big\{ P({\bm x})
    + \frac{\varepsilon P^{\gamma}({\bm x}) \rho({\bm x})}{2} 
    - \frac{\varepsilon^2 P^{2\gamma-1}({\bm x}) \rho^2({\bm x})}{4} \big\}
    {\rm d}{\bm x} + o(\varepsilon^2), \\
\end{aligned}
\end{equation}
from which, the proportion of small probability events in $L_{\text{LS}}(D=D^{*}_{\text{LS}},G)$ is derived as
\begin{equation}\label{eq.rare_LS}
\begin{aligned}
    \Upsilon_{\Omega_{\text{\rm small}}}^{\text {\rm LS}} 
    &= \frac{\Lambda_{L_{\text{LS}}(D=D^{*}_{\text{LS}},G)}}{L_{\text{LS}}(D=D^{*}_{\text{LS}},G)}\\
    & =  \frac{
     \int_{\Omega_{\text{small}}} \big\{
     \frac{2P({\bm x}) + \varepsilon P^{\gamma}({\bm x})\rho({\bm x})}{2}
     - \frac{\varepsilon^2 P^{2\gamma-1}({\bm x})\rho^2({\bm x})}{4} \big\} {\rm d}{\bm x}
     + o(\varepsilon^2)
    }
    {
    1
    - \frac{\varepsilon^2}{4} \int_{\mathcal{X}} P^{2\gamma-1}({\bm x})\rho^2({\bm x}) {\rm d}{\bm x}
    + o(\varepsilon^2)
    } \\
    & \approx \frac{\frac{ \Phi\{{\bm x}\in \Omega_{\text{\rm small}}\}
    + \Phi_{g_{\theta}}\{{\bm x}\in \Omega_{\text{\rm small}}\}}{2}
    -\frac{\varepsilon^2}{4} \int_{\Omega_{\text{\rm small}}} P^{2\gamma-1}({\bm x})\rho^2({\bm x})
    {\rm d}{\bm x} }
    { 1-\frac{\varepsilon^2}{4} \int_{\mathcal{X}} P^{2\gamma-1}({\bm x})\rho^2({\bm x})
    {\rm d}{\bm x} }.
\end{aligned}
\end{equation}
Comparing Eq. (\ref{eq.rare_MIM}) with Eq. (\ref{eq.rare_LS}), we hold Eq. (\ref{eq.Lambda_MIM_LS})
which results from Eq. (\ref{eq.rare_MIM_minus_rare_KL}), similar to the comparison for $\Upsilon_{\Omega_{\text{\rm small}}}^{\text {\rm MIM}}$ and $\Upsilon_{\Omega_{\text{\rm small}}}^{\text {\rm KL}}$.

Therefore, the proof of this corollary is completed.

\section{Proof of Proposition \ref{prop.R_large_probability}}\label{app.prop.R_large_probability}

If there exist two classes for the real data, namely the large probability event set $\Omega_{\text{large}}$ as well as the small probability event set $\Omega_{\text{small}}$,
the proportion of large probability events in the objective function (described as Eq. (\ref{eq.LMIM})) is given by
\begin{equation}\label{eq.R_large}
\begin{aligned}
    \Upsilon_{\Omega_{\text{\rm large}}}^{\text {\rm MIM}} 
    & = \frac{\int_{\Omega_{\text{large}}}[ P({\bm x})\exp(1-D({\bm x})) +  P_{g_{\theta}}({\bm x})\exp(D({\bm x}))] {\rm d}{\bm x}}
    {\int_{\Omega_{\text{large}}+\Omega_{\text{small}}}[ P({\bm x})\exp(1-D({\bm x})) +  P_{g_{\theta}}({\bm x})\exp(D({\bm x}))] {\rm d}{\bm x}}\\
    & \overset{(b)}{=}
    \frac{
     \int_{\Omega_{\text{large}}}
     \big\{ P({\bm x})\big(\frac{P({\bm x})}{P_{g_{\theta}}({\bm x})}\big)^{-\frac{1}{2}}
    + P_{g_{\theta}}({\bm x})
    \big(\frac{P_{g_{\theta}}({\bm x})}{P({\bm x})}\big)^{-\frac{1}{2}} \big\}
    {\rm d}{\bm x}
    }
    {
     \int_{\Omega_{\text{large}}+\Omega_{\text{small}}}
     \big\{ P({\bm x})\big(\frac{P({\bm x})}{P_{g_{\theta}}({\bm x})}\big)^{-\frac{1}{2}}
    + P_{g_{\theta}}({\bm x})
    \big(\frac{P_{g_{\theta}}({\bm x})}{P({\bm x})}\big)^{-\frac{1}{2}} \big\}
    {\rm d}{\bm x}
    }\\
    & =
    \frac{
     \int_{\Omega_{\text{large}}}
     [{P({\bm x})}{P_{g_{\theta}}({\bm x})}]^{\frac{1}{2}} {\rm d}{\bm x}
    }
    {
     \int_{\Omega_{\text{large}}+\Omega_{\text{small}}}
     [{P({\bm x})}{P_{g_{\theta}}({\bm x})}]^{\frac{1}{2}} {\rm d}{\bm x}
    }\text{,}
\end{aligned}
\end{equation}
in which the equality $(b)$ is derived by replacing the discriminator $D$ with $D_{\text{MIM}}^*$ mentioned in Eq. (\ref{eq.D*}).

Furthermore, when the relationship between real distribution and generative distribution is described as Eq. (\ref{eq.Pg_Pr}), we have the the proportion of large probability events in the value of objective function as
\begin{equation}
\begin{aligned}
& \Upsilon_{\Omega_{\text{\rm large}}}^{\text {\rm MIM}}\\
& =
\frac{
\int_{\Omega_{\text{large}}}
\{P({\bm x})[ P({\bm x}) + \varepsilon P^{\gamma}({\bm x}) \rho({\bm x}) ]\}^{\frac{1}{2}} {\rm d}{\bm x}
}
{
\int_{\Omega_{\text{large}}+\Omega_{\text{small}}}
\{P({\bm x})[ P({\bm x}) + \varepsilon P^{\gamma}({\bm x}) \rho({\bm x}) ]\}^{\frac{1}{2}} {\rm d}{\bm x}
}\\
& =
\frac{
\int_{\Omega_{\text{large}}}
P({\bm x})[ 1 + \varepsilon P^{\gamma-1}({\bm x}) \rho({\bm x}) ]^{\frac{1}{2}} {\rm d}{\bm x}
}
{
\int_{\Omega_{\text{large}}+\Omega_{\text{small}}}
P({\bm x})[ 1 + \varepsilon P^{\gamma-1}({\bm x}) \rho({\bm x}) ]^{\frac{1}{2}} {\rm d}{\bm x}
}\\
& \overset{(c)}{=}
\frac{
\int_{\Omega_{\text{large}}}
P({\bm x})[ 1 + \frac{1}{2}\varepsilon P^{\gamma-1}({\bm x}) \rho({\bm x}) + o(\varepsilon) ] {\rm d}{\bm x}
}
{
\int_{\Omega_{\text{large}}+\Omega_{\text{small}}}
P({\bm x})[ 1 + \frac{1}{2}\varepsilon P^{\gamma-1}({\bm x}) \rho({\bm x}) + o(\varepsilon) ] {\rm d}{\bm x}
}\\
& =
\frac{ \int_{\Omega_{\text{large}}}P({\bm x}) {\rm d}{\bm x} + O(\varepsilon) }
{
\int_{\Omega_{\text{large}}+\Omega_{\text{small}}} P({\bm x}) {\rm d}{\bm x} + O(\varepsilon)
} 
=
\frac{ \int_{\Omega_{\text{large}}}P({\bm x}) {\rm d}{\bm x} + O(\varepsilon) }
{1+ O(\varepsilon)},
\end{aligned}
\end{equation}
where the notations of $\rho({\bm x})$, $\gamma$ and $\varepsilon$ are the same as those in Eq. (\ref{eq.Pg_Pr}), as well as $O(\cdot)$ and $o(\cdot)$ denote the equivalent infinitesimal and
infinitesimal of higher order, respectively. Moreover, the equality $(c)$ results from Taylor's Series Expansion Theorem.

Furthermore, if the small probability events satisfy $\int_{\Omega_{\text{small}}}{P({\bm x})} {\rm d}{\bm x} < \zeta$ (where the parameter satisfies $\zeta \ll 1$), we have
\begin{equation}
\begin{aligned}
\Upsilon_{\Omega_{\text{\rm large}}}^{\text {\rm MIM}}
& =
\frac{ 1- \int_{\Omega_{\text{small}}}P({\bm x}) {\rm d}{\bm x} + O(\varepsilon) }
{1+ O(\varepsilon)} \\
& \ge
\frac{ 1-\zeta + O(\varepsilon) }
{1+ O(\varepsilon)}
\overset{\varepsilon \to 0}{\longrightarrow}
1-\zeta,
\end{aligned}
\end{equation}
which verifies this proposition.

\ifCLASSOPTIONcaptionsoff
  \newpage
\fi

%
%
%
%
%
%
%

\end{document}